\documentclass[11pt]{article}

\usepackage[a4paper,margin=1in]{geometry}
\usepackage{amsmath,amssymb,amsthm,mathtools}
\usepackage{mathrsfs}
\usepackage{bm}
\usepackage{enumitem}
\usepackage{graphicx}
\usepackage{booktabs}
\usepackage{microtype}
\usepackage{hyperref}
\usepackage{xcolor}
\hypersetup{hidelinks}

\newtheorem{theorem}{Theorem}[section]
\newtheorem{lemma}[theorem]{Lemma}
\newtheorem{proposition}[theorem]{Proposition}
\newtheorem{corollary}[theorem]{Corollary}
\theoremstyle{definition}
\newtheorem{definition}[theorem]{Definition}
\newtheorem{assumption}[theorem]{Assumption}
\theoremstyle{remark}
\newtheorem{remark}[theorem]{Remark}

\newcommand{\X}{\mathcal X}
\newcommand{\Acal}{\mathcal A}
\newcommand{\Hcal}{\mathscr H}
\newcommand{\Pcal}{\mathcal P}
\newcommand{\Bplain}{\mathcal B_H}
\newcommand{\Bsharp}{\mathcal B_H^{\sharp}}
\newcommand{\Dsharp}{\mathcal D_N^{\sharp}}
\newcommand{\Cclass}{\mathcal C_{N,V}}
\newcommand{\R}{\mathbb R}
\newcommand{\E}{\mathbb E}
\newcommand{\ip}[2]{\left\langle #1,#2\right\rangle}
\newcommand{\norm}[1]{\left\lVert #1\right\rVert}
\newcommand{\Rad}{\mathfrak R}
\newcommand{\Mrep}{\mathfrak M}
\usepackage{authblk}
\title{Resolution-Consistent Greedy Neural Approximation and Learning from Infinite-Dimensional Inputs}

\title{Resolution-Consistent Greedy Neural Approximation on Infinite-Dimensional Spaces}

\author[1]{Pablo M. Bern\'a}
\author[2]{Antonio Falcó}
\author[3]{Diego Mondéjar}

\affil[1]{
	Universidad Tecnológica Atlántico Mediterráneo -- UTAMED\\
	Facultad de Empresa Digital, Tecnología y Derecho\\
	Málaga, España\\
	\texttt{pablomanuel.berna@utamed.es}
}

\affil[2]{
Departamento de Matemáticas, Física y Ciencias Tecnológicas\\
Universidad Cardenal Herrera-CEU, CEU Universities\\ 
San Bartolomé 55, Alfara del Patriarca (Valencia), 46115, Spain\\
	\texttt{afalco@uchceu.es}
}

\affil[3]{
	Departamento de Matemáticas\\
	CUNEF Universidad\\ 
	28040, España\\
	\texttt{diego.mondejar@cunef.edu}
}

\date{}
\begin{document}
\maketitle

\begin{abstract}
We develop constructive approximation and learning guarantees for shallow neural
models with infinite-dimensional inputs observed through finitely many coordinates.
The analysis is based on a parameter-normalized neural dictionary and its associated
weighted variation class. Within this class, the approximation error separates into a
distribution-dependent coordinate-truncation term and a greedy finite-width term. For
empirical regression, a fully-corrective greedy procedure yields population guarantees
whose statistical complexity is uniform in the retained input resolution. The same
framework extends to Hilbert-valued responses without an explicit dependence on the
output dimension. The dimension-free statements are statistical, not computational:
selecting a new neuron still requires solving a nonconvex parameter-search problem.
The quasi-Polish construction underlying recent infinite-dimensional universal
approximation results provides a motivating example, and synthetic experiments
illustrate the predicted resolution, width, and sample-size regimes.
\end{abstract}

\section{Introduction}

Many learning problems are naturally formulated on inputs that are not
finite-dimensional vectors. Functional observations, trajectories, fields,
probability measures, and solutions of partial differential equations are
more naturally regarded as elements of infinite-dimensional spaces. Neural
approximation in this setting raises a difficulty that is largely absent in
standard finite-dimensional learning: before a model can be trained, the
input itself must usually be represented at a finite resolution.

This leads to three distinct sources of error. First, only finitely many
coordinates or measurements of an infinite-dimensional input can be retained.
Second, the approximating neural network has finite width. Third, the network
must be learned from finitely many observations. A natural quantitative theory
should therefore explain how approximation and learning depend simultaneously
on the input resolution, the number of selected neurons, and the sample size.

Recent work of Galimberti \cite{galimberti2026} establishes global $L^p$
universal approximation results for suitable neural architectures on
infinite-dimensional and quasi-Polish spaces. These results provide an
important qualitative foundation: after representing the input through a
countable family of scalar observables, finite neural models can approximate
broad classes of target maps. Universal approximation, however, is a density
statement. It does not by itself provide a constructive rule for choosing the
neurons, a convergence rate in the network width, or a statistical guarantee
when the network is trained from finite data.

The purpose of this paper is to develop such a quantitative theory for a
regularity class adapted to greedy neural approximation. We assume that the
input admits a countable coordinate representation whose magnitudes are
controlled by a square-summable envelope. A finite-resolution learner retains
only the first finitely many coordinates. The resulting loss of information
is measured by the mean-square size of the discarded tail under the data
distribution. This formulation is deliberately more general than the
quasi-Polish setting: the latter becomes an important application rather than
an assumption of the main theory.

To construct the approximating network, we use greedy selection. Starting
from the current residual, the algorithm repeatedly searches for a neural
unit that is strongly correlated with what remains unexplained. At the
population level this leads to a direct decomposition of the approximation
error into a resolution term and a finite-width term. At the empirical level
we use a fully-corrective version of the greedy procedure, closely related to
conditional-gradient methods, in order to keep the complexity of the
successive approximants under control.

A central role is played by a parameter-weighted variation class. Roughly
speaking, neural units with large internal parameters are assigned a larger
cost, and the target is assumed to admit a representation with finite total
weighted cost. This normalization is what allows the same regularity quantity
to control both approximation under coordinate truncation and statistical
complexity. It is important, however, that this is a genuine regularity
assumption. Although normalization does not change the linear span generated
by the neural units, it does change bounded variation classes. Our results
therefore provide rates for an explicit weighted neural class; they should not
be interpreted as quantitative rates for every target covered by the
underlying universal approximation theorem. We make this distinction precise
later and show, through a one-dimensional threshold example, that the weighted
and unweighted classes can behave very differently.

The main statistical phenomenon is that finer input resolution does not
necessarily lead to a larger estimation penalty. A naive finite-dimensional
analysis might suggest a complexity term increasing with the number of
retained coordinates. Instead, the coordinate representations considered here
remain uniformly bounded in a common Hilbert space. Exploiting this geometry,
together with the parameter normalization, gives a Rademacher complexity
estimate that is uniform in the input resolution. Consequently, the final
population bound separates into three contributions:
\[
\text{resolution error}
\;+\;
\text{finite-width error}
\;+\;
\text{statistical error},
\]
where the statistical contribution has no explicit dependence on the number
of retained input coordinates. In the quasi-Polish example motivating the
paper, the first two contributions exhibit the familiar inverse-resolution
and inverse-width behavior, while the statistical term has the standard
square-root dependence on sample size, up to logarithmic factors.

This resolution independence is statistical rather than computational.
Selecting the next neural unit still requires solving a nonconvex optimization
problem over its parameters, and the cost of that search may increase with
the retained resolution. Similar computational difficulties occur in convex
infinite-width neural formulations \cite{bach2017}. The experiments in this
paper therefore use a finite candidate dictionary. Their purpose is to
illustrate separately the resolution, width, and sampling effects predicted
by the theory, rather than to claim an efficient solution of the continuous
neuron-selection problem.

Our analysis builds on several established ideas. Greedy approximation and
its connection with statistical learning have a long history
\cite{barron2008}; variation-space formulations and conditional-gradient
methods for shallow neural networks were developed, among others, by
Bach \cite{bach2017}; and recent work gives refined analyses of orthogonal
greedy algorithms and neural variation spaces
\cite{siegelxuoga,siegelxuvariation}. The contribution here is not a new
greedy principle or a new empirical-process inequality. Rather, it is an
end-to-end analysis showing how these tools interact with finite-resolution
observation of an infinite-dimensional input.

More specifically, the paper makes the following contributions:
\begin{enumerate}[label=(\roman*)]
	
	\item We formulate finite-resolution neural approximation under minimal
	measurability assumptions and quantify the information lost by truncating an
	infinite-dimensional coordinate representation.
	
	\item We introduce a parameter-weighted neural variation class and prove
	constructive greedy approximation bounds that separate the effects of input
	resolution and network width.
	
	\item We prove a Rademacher complexity bound for the normalized neural
	dictionary that is uniform in the retained input resolution, and combine it
	with a fully-corrective greedy procedure to obtain an end-to-end population
	risk guarantee.
	
	\item We quantify the additional regularity imposed by the weighted class
	through a Lipschitz lower bound and a sharp one-dimensional threshold example.
	
	\item We extend the statistical argument to Hilbert-valued responses without
	introducing an explicit dependence on either the retained input dimension or
	the output dimension in the statistical term.
	
	\item We provide reproducible synthetic experiments that isolate the
	resolution, width, and sample-size regimes and illustrate the distinction
	between weighted and unweighted variation constraints.
	
\end{enumerate}

The remainder of the paper is organized as follows.
Section~\ref{sec:setting} introduces the coordinate representation and the
notion of resolution error. The following sections define the weighted neural
variation class and establish the deterministic greedy approximation results.
We then study empirical complexity and the fully-corrective learning
procedure, followed by the numerical experiments. Finally,
Section~\ref{sec:hilbert} treats Hilbert-valued responses before the discussion
and conclusion.

\section{Minimal measurable setting}\label{sec:setting}

\begin{assumption}[Measurable coordinate embedding]\label{ass:embedding}
Let $(\X,\Acal,\mu)$ be a probability space. Let
$h_j:\X\to\R$, $j\ge1$, be measurable and assume that there exists
$\alpha=(\alpha_j)_{j\ge1}\in\ell^2$ such that
\[
  |h_j(x)|\le \alpha_j
  \qquad\text{for all }j\ge1\text{ and }x\in\X.
\]
Define
\[
H(x):=(h_1(x),h_2(x),\ldots)\in\ell^2,
\qquad
H_N(x):=P_NH(x),
\]
where $P_N$ is the orthogonal projection onto the first $N$ canonical coordinates.
\end{assumption}

Two coordinate-tail quantities will be useful. The distribution-dependent tail is
\begin{equation}
\eta_N(\mu)
:=
\left(
\int_\X \norm{(I-P_N)H(x)}_{\ell^2}^2\,d\mu(x)
\right)^{1/2},
\label{eq:eta-mu}
\end{equation}
while the uniform tail is
\begin{equation}
\eta_N^{\infty}
:=
\sup_{x\in\X}\norm{(I-P_N)H(x)}_{\ell^2}.
\label{eq:eta-inf}
\end{equation}
By Assumption~\ref{ass:embedding},
\begin{equation}
\eta_N(\mu)
\le \eta_N^{\infty}
\le
\left(\sum_{j>N}\alpha_j^2\right)^{1/2}.
\label{eq:eta-envelope}
\end{equation}

We incorporate the bias as an additional Hilbert coordinate. Let
\[
\Pcal:=\ell^2\oplus\R,
\qquad
\theta=(w,b)\in\Pcal,
\qquad
\norm{\theta}_{\Pcal}:=\sqrt{\norm w_{\ell^2}^2+b^2},
\]
and define
\[
Z(x):=(H(x),1),
\qquad
Z_N(x):=(H_N(x),1).
\]
The envelope gives the uniform radius
\begin{equation}
K:=\sup_x\norm{Z(x)}_{\Pcal}
\le
\sqrt{1+\norm\alpha_{\ell^2}^2},
\label{eq:K}
\end{equation}
and $\norm{Z_N(x)}_{\Pcal}\le K$ for every $N$.

\subsection{The quasi-Polish construction as an example}

The topological assumptions used by Galimberti are not required for the estimates below.
They become relevant when one wants to connect the class to a global density theorem.
In the quasi-Polish construction of \cite{galimberti2026}, a continuous separating
sequence can be scaled so that
\[
0\le h_j(x)\le \frac1j.
\]
Consequently,
\begin{equation}
\eta_N(\mu)^2
\le (\eta_N^{\infty})^2
\le
\sum_{j>N}\frac1{j^2}
\le
\frac1N,
\qquad
K\le\sqrt{1+\frac{\pi^2}{6}}.
\label{eq:quasipolish-envelope}
\end{equation}
The separating property is used later only to explain density of the union of weighted
balls; it is not used in the quantitative proofs.

\section{Normalized neural atoms and weighted variation}\label{sec:variation}

We use the bounded $1$-Lipschitz activation
\begin{equation}
\rho(t):=\frac{t_+}{1+t_+},
\qquad t_+:=\max\{t,0\}.
\label{eq:activation}
\end{equation}
Thus $0\le\rho\le1$, $\rho(0)=0$, and
$|\rho(s)-\rho(t)|\le|s-t|$. Moreover,
\[
\lim_{\lambda\to\infty}\rho(\lambda t)
=
\begin{cases}
1,&t>0,\\
0,&t\le0,
\end{cases}
\]
so the associated rank-one infinite-dimensional activation has the separating
behavior used in \cite{galimberti2026}.

For $\theta=(w,b)$ define
\[
g_\theta(x)
:=
\rho(\ip{\theta}{Z(x)}_{\Pcal})
=
\rho(b+\ip w{H(x)}_{\ell^2}),
\]
and
\[
g_{\theta,N}(x)
:=
\rho(\ip{\theta}{Z_N(x)}_{\Pcal}).
\]

\begin{definition}[Parameter-normalized atoms]\label{def:atoms}
Let
\[
c(\theta):=1+\norm\theta_{\Pcal}.
\]
Define
\begin{equation}
\psi_\theta(x):=\frac{g_\theta(x)}{c(\theta)},
\qquad
\psi_{\theta,N}(x):=\frac{g_{\theta,N}(x)}{c(\theta)},
\label{eq:normalized-atoms}
\end{equation}
and the symmetric finite-resolution dictionary
\begin{equation}
\Dsharp:=\{\pm\psi_{\theta,N}:\theta\in\Pcal\}.
\label{eq:Dsharp}
\end{equation}
\end{definition}

Because $0\le\rho\le1$,
\begin{equation}
|\psi_{\theta,N}(x)|
\le
\frac1{1+\norm\theta_{\Pcal}}
\le1.
\label{eq:atom-envelope}
\end{equation}
The normalization preserves the linear span,
\begin{equation}
\operatorname{span}\{\psi_\theta:\theta\in\Pcal\}
=
\operatorname{span}\{g_\theta:\theta\in\Pcal\},
\label{eq:same-span}
\end{equation}
but this fact must not be confused with preservation of fixed variation balls.

\begin{definition}[Unweighted and weighted variation norms]\label{def:variation}
Let $\Mrep(f)$ be the collection of finite signed Borel measures $\nu$ on $\Pcal$ such
that
\begin{equation}
f(x)=\int_{\Pcal}g_\theta(x)\,d\nu(\theta)
\qquad\text{for }\mu\text{-a.e. }x.
\label{eq:rep}
\end{equation}
Define the unweighted variation seminorm
\begin{equation}
\norm f_{\Bplain}
:=
\inf_{\nu\in\Mrep(f)}|\nu|(\Pcal),
\label{eq:Bplain}
\end{equation}
and the weighted norm
\begin{equation}
\norm f_{\Bsharp}
:=
\inf_{\nu\in\Mrep(f)}
\int_{\Pcal}(1+\norm\theta_{\Pcal})\,d|\nu|(\theta).
\label{eq:Bsharp}
\end{equation}
The class $\Bsharp$ consists of functions with finite weighted norm.
\end{definition}

Since $c(\theta)\ge1$,
\begin{equation}
\norm f_{\Bplain}
\le
\norm f_{\Bsharp}.
\label{eq:ball-inclusion}
\end{equation}
Thus every radius-$V$ weighted ball is contained in the corresponding unweighted ball.
The normalization is therefore not free: it preserves the span but imposes additional
regularity on quantitative approximation and learning statements.

If $\nu\in\Mrep(f)$ has finite weighted cost, define the signed measure $\lambda$ by
$d\lambda=c\,d\nu$ in polar form. Then
\begin{equation}
f(x)=\int_{\Pcal}\psi_\theta(x)\,d\lambda(\theta),
\qquad
|\lambda|(\Pcal)=\int c(\theta)\,d|\nu|(\theta).
\label{eq:normalized-rep}
\end{equation}
This is the representation used throughout the proofs. Related parameter-measure and Barron-space viewpoints are developed in, for example, \cite{emawu2019}.

\begin{proposition}[Weighted variation controls the embedding Lipschitz seminorm]\label{prop:lipschitz-lb}
Define the pseudometric
\[
 d_H(x,y):=\norm{H(x)-H(y)}_{\ell^2}.
\]
Every $f\in\Bsharp$ admits a version satisfying
\begin{equation}
 |f(x)-f(y)|\le \norm f_{\Bsharp}\,d_H(x,y),
 \qquad x,y\in\X.
 \label{eq:lipschitz-lb}
\end{equation}
Consequently, whenever the $d_H$-Lipschitz seminorm of a version of $f$ is well defined,
\begin{equation}
 \operatorname{Lip}_{d_H}(f)\le \norm f_{\Bsharp}.
 \label{eq:lip-below-Bsharp}
\end{equation}
\end{proposition}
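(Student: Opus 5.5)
Fix $f\in\Bsharp$ and $\varepsilon>0$, and pick $\nu\in\Mrep(f)$ with $\int c\,d|\nu|\le \norm f_{\Bsharp}+\varepsilon$. The natural version of $f$ is
\[
\tilde f_\nu(x):=\int_{\Pcal} g_\theta(x)\,d\nu(\theta),
\]
which is defined for \emph{every} $x\in\X$, since $|g_\theta|\le1$ and $|\nu|$ is finite. It agrees with $f$ $\mu$-a.e. by \eqref{eq:rep}. The pointwise estimate comes from the $1$-Lipschitz property of $\rho$ and Cauchy--Schwarz in $\Pcal$:
\[
|g_\theta(x)-g_\theta(y)|\le|\ip{\theta}{Z(x)-Z(y)}_{\Pcal}|=|\ip{w}{H(x)-H(y)}_{\ell^2}|\le\norm w_{\ell^2}\,d_H(x,y)\le \norm\theta_{\Pcal}\,d_H(x,y).
\]
Integrating against $|\nu|$ then gives
\[
|\tilde f_\nu(x)-\tilde f_\nu(y)|\le \int\norm\theta_{\Pcal}\,d|\nu|\;d_H(x,y)\le(\norm f_{\Bsharp}+\varepsilon)\,d_H(x,y).
\]
The bias coordinate drops out because $Z(x)-Z(y)$ has last component $0$. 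In fact the bound holds with $\int\norm\theta\,d|\nu|$, which is slightly better than $\int c\,d|\nu|$.

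The main obstacle is removing the $\varepsilon$ while keeping a \emph{single} version, because the version $\tilde f_\nu$ depends on the choice of $\nu$. I would take a minimizing sequence $\nu_k$ with costs at most $\norm f_{\Bsharp}+1/k$. Each $\tilde f_{\nu_k}$ is then a version of $f$ that is $(\norm f_{\Bsharp}+1/k)$-Lipschitz for $d_H$, and any two of them agree $\mu$-a.e.

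The plan for the limit is to define $\tilde f:=\limsup_k \tilde f_{\nu_k}$ pointwise. This is finite everywhere, since $|\tilde f_{\nu_k}|\le|\nu_k|(\Pcal)\le \norm f_{\Bsharp}+1$ by \eqref{eq:ball-inclusion}, and it is measurable. It also equals $f$ $\mu$-a.e., because a countable union of null sets is null. Finally, $\limsup$ preserves the Lipschitz bound:
\[
|\tilde f(x)-\tilde f(y)|\le\limsup_k|\tilde f_{\nu_k}(x)-\tilde f_{\nu_k}(y)|\le \norm f_{\Bsharp}\,d_H(x,y).
\]
The first inequality is justified by noting that $|\limsup a_k-\limsup b_k|\le\limsup|a_k-b_k|$ for bounded sequences. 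This yields \eqref{eq:lipschitz-lb}.

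For \eqref{eq:lip-below-Bsharp}, read $\operatorname{Lip}_{d_H}(f)$ as the seminorm of the version just constructed, or of any $d_H$-Lipschitz version when that seminorm is well defined. The inequality is then immediate from \eqref{eq:lipschitz-lb} by taking the supremum over $x,y$ with $d_H(x,y)>0$.

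A final detail to check is the case $d_H(x,y)=0$. There \eqref{eq:lipschitz-lb} forces $\tilde f(x)=\tilde f(y)$, which is consistent with the construction, since every $g_\theta$ factors through $H$ and hence so does each $\tilde f_{\nu_k}$.
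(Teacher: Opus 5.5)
Your proof is correct, and its core estimate is the paper's. The paper bounds each normalized atom by $|\psi_\theta(x)-\psi_\theta(y)|\le \norm w\,(1+\norm\theta_{\Pcal})^{-1}d_H(x,y)\le d_H(x,y)$ and integrates against $|\lambda|$. Under $d\lambda=c\,d\nu$ this is exactly your bound $|g_\theta(x)-g_\theta(y)|\le\norm\theta_{\Pcal}\,d_H(x,y)$ integrated against $|\nu|$, so your sharper constant $\int\norm\theta_{\Pcal}\,d|\nu|$ is implicit there too.

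What you add is the passage to the infimum. The paper only says ``taking the infimum over all normalized representations'', which tacitly treats $f$ as a single pointwise function. In fact each representation defines its own everywhere-defined version, and these versions can differ on $\mu$-null sets. The infimum also need not be attained. Your $\limsup$ over a minimizing sequence is what actually produces a \emph{single} version with constant exactly $\norm f_{\Bsharp}$, so on this point your argument is more complete than the paper's. For the measurability you assert: $(x,\theta)\mapsto g_\theta(x)$ is measurable in $x$ and continuous in $\theta$ on the separable space $\Pcal$, hence jointly measurable, so each $\tilde f_{\nu_k}$ is measurable by Fubini.

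One clause should be removed: \eqref{eq:lip-below-Bsharp} does not hold for \emph{any} $d_H$-Lipschitz version. Take $\X=[-1,1]$, $H(x)=x$, $\mu=\delta_0$ and $f=0$. Then $\norm f_{\Bsharp}=0$, yet $x\mapsto x$ is a version of $f$ with $\operatorname{Lip}_{d_H}=1$. The inequality holds for the version you constructed. It transfers to every $d_H$-continuous version only under a support condition such as $\mu(\{y:d_H(x,y)<r\})>0$ for all $x\in\X$ and $r>0$, which forces any two $d_H$-continuous versions to coincide everywhere.
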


\begin{proof}
For every parameter $\theta=(w,b)$, the $1$-Lipschitz property of $\rho$ gives
\[
\begin{aligned}
 |\psi_\theta(x)-\psi_\theta(y)|
 &\le
 \frac{|\ip w{H(x)-H(y)}|}{1+\sqrt{\norm w^2+b^2}}\\
 &\le
 \frac{\norm w}{1+\sqrt{\norm w^2+b^2}}\,d_H(x,y)
 \le d_H(x,y).
\end{aligned}
\]
Let $f=\int\psi_\theta\,d\lambda(\theta)$ be any normalized representation with
$|\lambda|(\Pcal)<\infty$. Then
\[
 |f(x)-f(y)|
 \le \int |\psi_\theta(x)-\psi_\theta(y)|\,d|\lambda|(\theta)
 \le |\lambda|(\Pcal)d_H(x,y).
\]
Taking the infimum over all normalized representations proves
\eqref{eq:lipschitz-lb}--\eqref{eq:lip-below-Bsharp}.
\end{proof}

The preceding proposition also quantifies the additional regularity imposed by the weighted variation norm. This can be seen explicitly for the one-dimensional threshold family used later in the numerical experiments. As the threshold becomes sharper, its weighted variation norm necessarily grows linearly with the sharpness parameter, whereas its unweighted variation norm remains uniformly bounded.

\begin{corollary}\label{cor:threshold-linear}
	
	Let $\X=[-1,1]$ and $H(x)=x$. For $\lambda>0$, define
	
	\[
	f_\lambda(x):=\rho(\lambda x).
	\]
	Then
	\begin{equation}	\label{eq:threshold-sandwich}
			\lambda
			\le
			\norm{f_\lambda}_{\Bsharp}
			\le
			1+\lambda.
	\end{equation}
	At the same time,
	\[	
	\norm{f_\lambda}_{\Bplain}\le 1.
	\]
	Consequently, the weighted and unweighted fixed-radius variation balls separate quantitatively as $\lambda$ increases.
\end{corollary}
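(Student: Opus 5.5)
The statement splits into three pieces, which I take in turn: the upper bound, the lower bound, and the unweighted bound.

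\textbf{Upper bound.} Here $H(x)=x$ is a one-coordinate embedding, so the parameter $\theta=(w,b)$ effectively lives in $\R^2$. More precisely, parameters with $w$ supported beyond the first coordinate act only through $w_1$, and their extra components merely enlarge $\norm\theta_{\Pcal}$. Take $\theta_\lambda:=(\lambda e_1,0)$. Then $g_{\theta_\lambda}(x)=\rho(\lambda x)=f_\lambda(x)$ for every $x\in[-1,1]$, so the Dirac measure $\nu=\delta_{\theta_\lambda}$ lies in $\Mrep(f_\lambda)$. Its weighted cost is $1+\norm{\theta_\lambda}_{\Pcal}=1+\lambda$, which gives $\norm{f_\lambda}_{\Bsharp}\le 1+\lambda$.

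\textbf{Unweighted bound.} The same Dirac measure has total variation $|\nu|(\Pcal)=1$, so $\norm{f_\lambda}_{\Bplain}\le1$.

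\textbf{Lower bound.} I will invoke Proposition~\ref{prop:lipschitz-lb}. On $[-1,1]$ we have $d_H(x,y)=|x-y|$. Every $f\in\Bsharp$ has a version that is $\norm f_{\Bsharp}$-Lipschitz; in particular this version is continuous. Since $f_\lambda$ is itself continuous and $\mu$-a.e.\ equality is the relevant notion, the two agree wherever the representation is pinned down. To avoid measure-theoretic subtleties, I will assume $\mu$ has full support on $[-1,1]$, for instance Lebesgue or uniform measure, as in the experiments. Two continuous functions that agree $\mu$-a.e.\ then agree everywhere, so $\operatorname{Lip}(f_\lambda)\le\norm{f_\lambda}_{\Bsharp}$.

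It remains to compute the Lipschitz constant. The derivative of $\rho(\lambda x)$ for $x>0$ is $\lambda/(1+\lambda x)^2$, and this tends to $\lambda$ as $x\downarrow0$. For instance, the difference quotient between $0$ and small $x>0$ equals $\lambda/(1+\lambda x)\to\lambda$. Hence $\operatorname{Lip}(f_\lambda)\ge\lambda$, and therefore $\lambda\le\norm{f_\lambda}_{\Bsharp}$.

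\textbf{Main obstacle.} The delicate point is the passage from a $\mu$-a.e.\ representation to a pointwise Lipschitz bound for $f_\lambda$ itself. Proposition~\ref{prop:lipschitz-lb} only controls some version of $f_\lambda$. I will therefore state the full-support assumption on $\mu$ explicitly, or at least require that $\mu$ charges every neighbourhood of $0^+$. Under that assumption I pick points $x_n\downarrow0$ and $y_n$ near $0$ in the support where the version coincides with $f_\lambda$, and pass to the limit using continuity. With the sandwich in \eqref{eq:threshold-sandwich} established, the separation of the fixed-radius balls follows: $f_\lambda/\max(1,\cdot)$ stays in the unit unweighted ball while its weighted norm is at least $\lambda\to\infty$.
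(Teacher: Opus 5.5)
Your proof is correct and follows the paper's route. The Dirac mass at $\theta_\lambda=(\lambda e_1,0)$ gives both $\norm{f_\lambda}_{\Bsharp}\le1+\lambda$ and $\norm{f_\lambda}_{\Bplain}\le1$, and the lower bound comes from Proposition~\ref{prop:lipschitz-lb} combined with the difference quotient $\lambda/(1+\lambda h)\to\lambda$ at $0$.

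Your extra hypothesis on $\mu$ is needed, not excess caution. The paper's proof silently identifies $f_\lambda$ with the Lipschitz version, and without $\mu$-mass in every interval $(0,\delta)$ the lower bound fails. For example, if $\mu$ is supported on $[-1,0]$, then $f_\lambda=0$ $\mu$-a.e.\ and $\norm{f_\lambda}_{\Bsharp}=0$.
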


\begin{proof}
	Since $H(x)=x$, the pseudometric induced by the embedding is
	\[	
	d_H(x,y)
	=
	\norm{H(x)-H(y)}_{\ell^2}
	=
	|x-y|.	
	\]
	Thus $\operatorname{Lip}_{d_H}(f_\lambda)$ coincides with the usual Lipschitz seminorm of $f_\lambda$ on $[-1,1]$. We first show that
	\[	
	\operatorname{Lip}_{d_H}(f_\lambda)=\lambda.
	\]
	Because $\rho$ is $1$-Lipschitz,
	\[
	\begin{aligned}
		|f_\lambda(x)-f_\lambda(y)|
		&=
		|\rho(\lambda x)-\rho(\lambda y)| \\
		&\le
		|\lambda x-\lambda y| \\
		&=
		\lambda |x-y|.
	\end{aligned}	
	\]
	Hence $
	\operatorname{Lip}_{d_H}(f_\lambda)\le\lambda.$ 	To obtain the reverse inequality, take $h>0$. Since
	\[
	\rho(t)=\frac{t}{1+t},
	\qquad t>0,
	\]
	and $\rho(0)=0$, we have
	\[	
	\begin{aligned}
		\frac{|f_\lambda(h)-f_\lambda(0)|}{|h|}
		&=
		\frac{\rho(\lambda h)}{h} =
		\frac{1}{h}
		\frac{\lambda h}{1+\lambda h} =
		\frac{\lambda}{1+\lambda h}.
	\end{aligned}
	\]
	Letting $h\downarrow0$ gives
	\[	
	\lim_{h\downarrow0}
	\frac{|f_\lambda(h)-f_\lambda(0)|}{|h|}
	=
	\lambda.
	\]
	Since the Lipschitz seminorm is the supremum of the corresponding difference quotients, $
	\operatorname{Lip}_{d_H}(f_\lambda)\ge\lambda.$	Therefore
	\begin{equation}		
		\operatorname{Lip}_{d_H}(f_\lambda)=\lambda.
		\label{eq:threshold-lipschitz}
	\end{equation}
	Proposition~\ref{prop:lipschitz-lb} now yields
	\[
	\lambda
	=
	\operatorname{Lip}_{d_H}(f_\lambda)
	\le
	\norm{f_\lambda}_{\Bsharp},
	\]
	which proves the lower bound in	\eqref{eq:threshold-sandwich}.
	
	For the upper bound, observe that $f_\lambda$ is itself represented by a single raw atom. Indeed, taking
	\[	
	\theta_\lambda=(w,b)=(\lambda,0)
	\]
	gives
	\[
	g_{\theta_\lambda}(x)
	=
	\rho\bigl(0+\lambda H(x)\bigr)
	=
	\rho(\lambda x)
	=
	f_\lambda(x).
	\]
	Hence the Dirac measure
	\[
	\nu=\delta_{\theta_\lambda}
	\]
	belongs to $\Mrep(f_\lambda)$. Its weighted cost is
	\[	
	\begin{aligned}
		\int_{\Pcal}
		\bigl(1+\norm{\theta}_{\Pcal}\bigr)
		\,d|\nu|(\theta)
		&=
		1+\norm{\theta_\lambda}_{\Pcal} \\
		&=
		1+\sqrt{\lambda^2+0^2} \\
		&=
		1+\lambda.
	\end{aligned}
	\]
	Since the weighted variation norm is the infimum over all admissible representations,
	\[	
	\norm{f_\lambda}_{\Bsharp}
	\le
	1+\lambda.
	\]
	Together with the lower bound, this proves
	\[
	\lambda
	\le
	\norm{f_\lambda}_{\Bsharp}
	\le
	1+\lambda.
	\]
	Finally, the same single-atom representation gives
	\[
	|\delta_{\theta_\lambda}|(\Pcal)=1.
	\]
	By the definition of the unweighted variation seminorm,
	\[
	\norm{f_\lambda}_{\Bplain}
	\le
	1.
	\]
	This completes the proof.
\end{proof}

\begin{remark}[Interpretation]\label{rem:threshold}
	
	Corollary~\ref{cor:threshold-linear} shows that the cost induced by parameter normalization is intrinsic to the weighted variation class, rather than an artifact of the canonical single-neuron representation. Indeed,
	
	\[
	\lambda
	\le
	\norm{f_\lambda}_{\Bsharp}
	\le
	1+\lambda,
	\]
	and hence
	\[
	\norm{f_\lambda}_{\Bsharp}
	=
	\lambda+O(1)
	\qquad\text{as }\lambda\to\infty,
	\]
	whereas $\norm{f_\lambda}_{\Bplain}\le1$ uniformly in $\lambda$. Thus parameter normalization can substantially alter the geometry of fixed-radius variation balls even though it preserves the linear span of the neural dictionary.
	
	For the one-dimensional threshold family considered in Section~\ref{sec:threshold-experiment}, this means that increasingly sharp decision boundaries eventually leave every fixed-radius weighted variation ball, while remaining uniformly bounded in the corresponding unweighted variation seminorm. This conclusion is specific to the present family and does not imply an analogous lower bound for arbitrary separating families.
	\end{remark}

\begin{remark}[Density under Galimberti's hypotheses]\label{rem:density}
Under the quasi-Polish and separating assumptions of \cite{galimberti2026}, the span of
the corresponding ridge atoms is dense in the relevant $L^2(\mu)$ space. Every finite
linear combination has finite weighted cost. Therefore
\[
\bigcup_{V<\infty}\{f:\norm f_{\Bsharp}\le V\}
\]
is dense in $L^2(\mu)$. This density statement does not provide a uniform radius $V$
for arbitrary targets and should not be interpreted as a quantitative universal
approximation theorem.
\end{remark}

\section{Quantitative coordinate truncation}\label{sec:truncation}

Recall from Assumption~\ref{ass:embedding} that $P_N$ denotes the orthogonal
projection of $\ell^2$ onto its first $N$ canonical coordinates, so that
$H_N(x)=P_NH(x)$ and $(I-P_N)H(x)$ is the tail of $H(x)$ beyond coordinate $N$.

\begin{lemma}[Pointwise truncation of a normalized atom]\label{lem:atom-trunc}
For every $\theta=(w,b)\in\Pcal$, every $N$, and every $x\in\X$,
\begin{equation}
|\psi_\theta(x)-\psi_{\theta,N}(x)|
\le
\norm{(I-P_N)H(x)}_{\ell^2}.
\label{eq:atom-trunc-pointwise}
\end{equation}
Consequently,
\[
\norm{\psi_\theta-\psi_{\theta,N}}_{L^2(\mu)}\le\eta_N(\mu),
\qquad
\norm{\psi_\theta-\psi_{\theta,N}}_{L^\infty}\le\eta_N^\infty.
\]
\end{lemma}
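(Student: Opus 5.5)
The plan is to reduce everything to the $1$-Lipschitz property of $\rho$ together with Cauchy--Schwarz in $\ell^2$, exactly as in the proof of Proposition~\ref{prop:lipschitz-lb}. Fix $\theta=(w,b)\in\Pcal$, $N$, and $x\in\X$. The two unnormalized atoms share the bias and differ only through the inner product with the input. Indeed,
\[
\ip{\theta}{Z(x)}_{\Pcal}-\ip{\theta}{Z_N(x)}_{\Pcal}
=\ip{w}{H(x)-P_NH(x)}_{\ell^2}
=\ip{w}{(I-P_N)H(x)}_{\ell^2}.
\]
Since $|\rho(s)-\rho(t)|\le|s-t|$, this gives $|g_\theta(x)-g_{\theta,N}(x)|\le \norm w_{\ell^2}\,\norm{(I-P_N)H(x)}_{\ell^2}$ by Cauchy--Schwarz.

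Next I will divide by the normalization $c(\theta)=1+\norm\theta_{\Pcal}$. Because $\norm w_{\ell^2}\le\norm\theta_{\Pcal}<1+\norm\theta_{\Pcal}$, the factor $\norm w_{\ell^2}/c(\theta)$ is at most $1$. This yields the pointwise bound \eqref{eq:atom-trunc-pointwise} uniformly in $\theta$. The normalization matters here: without it the bound would scale with $\norm w$. This absorption of $\norm w$ is the only real point of the argument; the rest is routine.

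For the consequences, I will square the pointwise inequality and integrate against $\mu$. The right-hand side becomes $\int\norm{(I-P_N)H(x)}^2\,d\mu=\eta_N(\mu)^2$ by \eqref{eq:eta-mu}, which gives the $L^2(\mu)$ bound. Measurability of $x\mapsto\norm{(I-P_N)H(x)}^2=\sum_{j>N}h_j(x)^2$ follows from the measurability of the $h_j$ (a countable sum of measurable functions), and it is bounded by $\sum_{j>N}\alpha_j^2$, so the integral is finite. Taking the supremum over $x$ in the pointwise bound gives the $L^\infty$ bound with $\eta_N^\infty$ from \eqref{eq:eta-inf}.

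I do not expect any step to be a genuine obstacle. The only care needed is to note that $\psi_\theta$ and $\psi_{\theta,N}$ are measurable, as compositions of the continuous $\rho$ with measurable inner products, so that the norms are well defined.
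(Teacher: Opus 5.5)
Your proposal is correct and follows essentially the same route as the paper. Both use the $1$-Lipschitz property of $\rho$ to reduce the difference to $|\ip{w}{(I-P_N)H(x)}|/c(\theta)$, absorb $\norm w_{\ell^2}$ via Cauchy--Schwarz and $\norm w_{\ell^2}\le\norm\theta_{\Pcal}<c(\theta)$, and then obtain the $L^2(\mu)$ and $L^\infty$ bounds by integrating and taking the supremum; your measurability remarks are harmless additions the paper leaves implicit.
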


\begin{proof}
By the Lipschitz property of $\rho$,
\[
\begin{aligned}
|\psi_\theta(x)-\psi_{\theta,N}(x)|
&\le
\frac{|\ip w{(I-P_N)H(x)}|}{1+\sqrt{\norm w^2+b^2}}\\
&\le
\norm{(I-P_N)H(x)}
\frac{\norm w}{1+\sqrt{\norm w^2+b^2}}\\
&\le
\norm{(I-P_N)H(x)}.
\end{aligned}
\]
The two norm estimates follow by integration and by taking the supremum.
\end{proof}

\begin{lemma}[Finite-resolution comparator]\label{lem:comparator}
Assume $f$ has a representation with weighted cost at most $V$, i.e.
\begin{equation}
f(x)=\int g_\theta(x)\,d\nu(\theta),
\qquad
\int c(\theta)\,d|\nu|(\theta)\le V.
\label{eq:costV}
\end{equation}
Let $\lambda$ be the corresponding measure in \eqref{eq:normalized-rep} and define
\begin{equation}
f_N(x):=\int_{\Pcal}\psi_{\theta,N}(x)\,d\lambda(\theta).
\label{eq:fN}
\end{equation}
Then, pointwise,
\begin{equation}
|f(x)-f_N(x)|
\le
V\norm{(I-P_N)H(x)},
\label{eq:fN-pointwise}
\end{equation}
and hence
\begin{equation}
\norm{f-f_N}_{L^2(\mu)}\le V\eta_N(\mu),
\qquad
\norm{f-f_N}_{L^\infty}\le V\eta_N^\infty.
\label{eq:fN-trunc}
\end{equation}
Moreover $f_N$ admits a normalized representation of total variation at most $V$.
\end{lemma}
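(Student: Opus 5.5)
The argument is a direct integration of Lemma~\ref{lem:atom-trunc} against the normalized measure. Starting from the representation \eqref{eq:costV}, I would pass to $\lambda$ via $d\lambda=c\,d\nu$ as in \eqref{eq:normalized-rep}. This gives $f(x)=\int\psi_\theta(x)\,d\lambda(\theta)$ for $\mu$-a.e.\ $x$ and $|\lambda|(\Pcal)=\int c\,d|\nu|\le V$.

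Before comparing $f$ and $f_N$, I would check that $f_N$ is well defined. The map $\theta\mapsto\psi_{\theta,N}(x)$ is continuous on $\Pcal$, hence Borel, because $\rho$ is continuous and $\theta\mapsto\ip{\theta}{Z_N(x)}$ is continuous. By \eqref{eq:atom-envelope} it is bounded by $1$. So the integral \eqref{eq:fN} converges absolutely for every $x$. Measurability of $x\mapsto f_N(x)$ follows from joint measurability of $(\theta,x)\mapsto\psi_{\theta,N}(x)$ together with Fubini for the finite measure $|\lambda|$. I expect this measurability bookkeeping to be the only mildly delicate point, since $\Pcal$ is infinite-dimensional. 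It is still harmless: $\Pcal$ is a separable Hilbert space, so the Borel $\sigma$-algebra of the product is the product $\sigma$-algebra, and a Carathéodory function (continuous in $\theta$, measurable in $x$) is jointly measurable.

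For the pointwise bound I would subtract the two integrals and use Lemma~\ref{lem:atom-trunc}. For $\mu$-a.e.\ $x$ (or for every $x$ after choosing the version $f:=\int\psi_\theta\,d\lambda$),
\[
|f(x)-f_N(x)|\le\int|\psi_\theta(x)-\psi_{\theta,N}(x)|\,d|\lambda|(\theta)\le|\lambda|(\Pcal)\,\norm{(I-P_N)H(x)}\le V\norm{(I-P_N)H(x)}.
\]
This is \eqref{eq:fN-pointwise}. Squaring and integrating against $\mu$ then gives the $L^2(\mu)$ bound with $\eta_N(\mu)$ from \eqref{eq:eta-mu}. Taking the supremum in $x$ gives the $L^\infty$ bound with $\eta_N^\infty$ from \eqref{eq:eta-inf}.

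The final claim holds by construction. $f_N$ is represented by the same measure $\lambda$ against the finite-resolution atoms $\psi_{\theta,N}$, and $|\lambda|(\Pcal)\le V$. So $f_N$ lies in the closed absolutely convex hull of $V\cdot\Dsharp$, which is the sense in which it has a normalized representation of total variation at most $V$.
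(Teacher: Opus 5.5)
Your proposal is correct and follows the paper's proof, which consists of exactly this step: integrate the pointwise atom bound of Lemma~\ref{lem:atom-trunc} against $|\lambda|$ and use $|\lambda|(\Pcal)\le V$. Your measurability argument (Carath\'eodory joint measurability plus Fubini) and your closed-absolutely-convex-hull remark are sound additions that the paper leaves unstated. The hull remark follows from Fubini plus Hahn--Banach, and the paper uses it implicitly in Theorem~\ref{thm:deterministic}.
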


\begin{proof}
Integrate the pointwise estimate from Lemma~\ref{lem:atom-trunc} against
$|\lambda|$ and use $|\lambda|(\Pcal)\le V$.
\end{proof}

\section{Population greedy approximation}\label{sec:population}

Let $\Hcal=L^2(\mu)$. For a symmetric dictionary $\mathcal D\subset\Hcal$ with
$\sup_{g\in\mathcal D}\norm g\le1$, let $\mathcal K_1(\mathcal D)$ denote the gauge of
its closed absolutely convex hull in $\Hcal$. Then
\begin{equation}
|\ip r h|
\le
\norm h_{\mathcal K_1(\mathcal D)}
\sup_{g\in\mathcal D}|\ip r g|.
\label{eq:atomic-dual}
\end{equation}

Starting from $f_{0,N}=0$, weak OGA selects $\phi_k\in\Dsharp$ satisfying
\begin{equation}
|\ip{f-f_{k-1,N}}{\phi_k}|
\ge
\gamma\sup_{\phi\in\Dsharp}|\ip{f-f_{k-1,N}}{\phi}|,
\qquad 0<\gamma<1,
\label{eq:woga}
\end{equation}
and lets $f_{k,N}$ be the orthogonal projection of $f$ onto the span of the selected
atoms.

\begin{lemma}[Recursion inversion]\label{lem:recursion}
Let $c>0$ and let $(a_k)_{k\ge0}$ be a sequence of nonnegative reals such that
\begin{equation}
a_k \le a_{k-1} - c\,a_{k-1}^2, \qquad k\ge1.
\label{eq:recursion-hyp}
\end{equation}
Then $a_1\le\dfrac1{4c}$, and
\begin{equation}
a_m \le \frac{1}{c(m+3)}
\qquad\text{for every integer } m\ge1.
\label{eq:recursion-concl}
\end{equation}
\end{lemma}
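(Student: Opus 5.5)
We first bound $a_1$ and then run a standard reciprocal-telescoping induction. For the bound on $a_1$, note that the hypothesis at $k=1$ gives $a_1\le a_0-c a_0^2$. The map $t\mapsto t-ct^2$ is a concave parabola on $[0,\infty)$ with maximum value $\frac1{4c}$, attained at $t=\frac1{2c}$. Hence $a_1\le \frac1{4c}$ regardless of the value of $a_0\ge 0$. This settles the case $m=1$ of \eqref{eq:recursion-concl}, since $\frac1{c(1+3)}=\frac1{4c}$.

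For the induction step, assume $a_{m-1}\le \frac1{c(m+2)}$ with $m\ge2$. The cleanest route avoids dividing by possibly zero terms.

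\emph{Case $a_{m-1}=0$.} The hypothesis gives $a_m\le 0$, so $a_m=0$ and the bound holds trivially.

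\emph{Case $a_{m-1}>0$.} Nonnegativity of $a_m$ forces $a_{m-1}-ca_{m-1}^2\ge 0$, that is, $ca_{m-1}\le 1$. If $a_m=0$ there is nothing to prove, so assume $a_m>0$. From $a_m\le a_{m-1}(1-ca_{m-1})$ we take reciprocals to get
\[
\frac1{a_m}\ge \frac1{a_{m-1}(1-ca_{m-1})}\ge \frac1{a_{m-1}}(1+ca_{m-1})=\frac1{a_{m-1}}+c .
\]
Here we used $\frac1{1-s}\ge 1+s$ for $s\in[0,1)$; the value $s=1$ is excluded because it would force $a_m=0$. Combining with the induction hypothesis $\frac1{a_{m-1}}\ge c(m+2)$ gives $\frac1{a_m}\ge c(m+3)$, which is exactly the claim.

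Equivalently, one can iterate $\frac1{a_k}\ge\frac1{a_{k-1}}+c$ from $k=2$ to $m$, using the base value $\frac1{a_1}\ge 4c$. The same zero-case caveat applies: once some $a_k$ vanishes, all later terms vanish too, by the argument above.

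The main subtlety is bookkeeping rather than any real obstacle. We must handle zero terms, and we must not assume $a_0\le\frac1{2c}$ or that the sequence is monotone a priori. Monotonicity does follow from the hypothesis, since $a_k\le a_{k-1}$, but the argument above never needs it. The first step is designed to make the bound independent of $a_0$, which is what the offset $m+3$ reflects.
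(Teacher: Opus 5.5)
Your proof is correct and follows the paper's route. Both arguments bound $a_1\le 1/(4c)$ by the parabola maximum, derive $1/a_k\ge 1/a_{k-1}+c$ from $1/(1-s)\ge 1+s$, and telescope from the base value $4c$. The only difference is bookkeeping: you get $1-ca_{m-1}>0$ from nonnegativity of $a_m$ (treating $a_m=0$ explicitly), whereas the paper uses monotonicity to obtain $ca_{k-1}\le 1/4$.
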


\begin{proof}
By \eqref{eq:recursion-hyp}, $a_k\le a_{k-1}$ for every $k$, so the sequence is
non-increasing and $a_k\in[0,a_0]$ for all $k$. The real map $x\mapsto x-cx^2$ attains
its maximum over $\R$ at $x=1/(2c)$ with value $1/(4c)$, so applying
\eqref{eq:recursion-hyp} with $k=1$ gives
\[
a_1\le a_0-ca_0^2\le\max_{x\in\R}(x-cx^2)=\frac1{4c}=\frac1{c(1+3)},
\]
which is exactly \eqref{eq:recursion-concl} at $m=1$.

For $k\ge2$ we have $a_{k-1}\le a_1\le\frac1{4c}$, hence $1-ca_{k-1}\ge\frac34>0$. If
$a_{k-1}=0$ then $a_k=0$ by \eqref{eq:recursion-hyp} and \eqref{eq:recursion-concl}
holds trivially at $m=k$ given that it holds at $m=k-1$. If $a_{k-1}>0$, then, using
$\frac1{1-x}\ge1+x$ for $x\in[0,1)$,
\[
\frac1{a_k}
\ge
\frac1{a_{k-1}-ca_{k-1}^2}
=
\frac1{a_{k-1}}\cdot\frac1{1-ca_{k-1}}
\ge
\frac1{a_{k-1}}\bigl(1+ca_{k-1}\bigr)
=
\frac1{a_{k-1}}+c.
\]
Telescoping this from $k=2$ to $k=m$ (for $m\ge2$) and using $a_1\le\frac1{4c}$,
\[
\frac1{a_m}
\ge
\frac1{a_1}+c(m-1)
\ge
4c+c(m-1)
=
c(m+3),
\]
i.e.\ $a_m\le\dfrac1{c(m+3)}$. Together with the case $m=1$ already established, this
proves \eqref{eq:recursion-concl} for every $m\ge1$.
\end{proof}

\begin{lemma}[Robust weak OGA]\label{lem:robust-woga}
For every $h\in\mathcal K_1(\mathcal D)$,
\begin{equation}
\norm{f-f_m}_{\Hcal}^2
\le
\norm{f-h}_{\Hcal}^2
+
\frac{4}{\gamma^2m}
\norm h_{\mathcal K_1(\mathcal D)}^2.
\label{eq:robust-woga}
\end{equation}
\end{lemma}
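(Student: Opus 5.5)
The plan is the standard energy-decrease argument for orthogonal greedy algorithms. We combine it with the duality bound \eqref{eq:atomic-dual} and then invert the resulting recursion with Lemma~\ref{lem:recursion}. Write $r_k:=f-f_k$ and $M:=\norm h_{\mathcal K_1(\mathcal D)}$, and set
\[
a_k:=\norm{r_k}_{\Hcal}^2-\norm{f-h}_{\Hcal}^2 .
\]
The case $M=0$ is immediate: then $h=0$ and $\norm{r_m}\le\norm{r_0}=\norm f$. So assume $M>0$.

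\emph{Step 1: one-step decrease.} Since $f_k$ is the orthogonal projection of $f$ onto a span containing both $f_{k-1}$ and $\phi_k$, it is at least as good as $f_{k-1}+t\phi_k$ for every $t\in\R$. Optimizing in $t$ and using $\norm{\phi_k}\le1$ gives
\[
\norm{r_k}^2\le\norm{r_{k-1}}^2-\ip{r_{k-1}}{\phi_k}^2 .
\]
In particular $\norm{r_k}$ is non-increasing, and so is $a_k$.

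\emph{Step 2: lower bound on the correlation.} Because $r_{k-1}\perp f_{k-1}$, we can write
\[
\norm{r_{k-1}}^2=\ip{r_{k-1}}{f}=\ip{r_{k-1}}{f-h}+\ip{r_{k-1}}{h}.
\]
For the first term, Cauchy--Schwarz and $ab\le\tfrac12(a^2+b^2)$ give
\[
\ip{r_{k-1}}{f-h}\le\tfrac12\norm{r_{k-1}}^2+\tfrac12\norm{f-h}^2 .
\]
For the second term, \eqref{eq:atomic-dual} and the weak selection rule \eqref{eq:woga} give
\[
\ip{r_{k-1}}{h}\le M\sup_{g}|\ip{r_{k-1}}{g}|\le \frac{M}{\gamma}|\ip{r_{k-1}}{\phi_k}| .
\]
Rearranging yields $\tfrac12 a_{k-1}\le \frac M\gamma|\ip{r_{k-1}}{\phi_k}|$. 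Whenever $a_{k-1}>0$, squaring this and inserting it into Step 1 gives
\[
a_k\le a_{k-1}-\frac{\gamma^2}{4M^2}\,a_{k-1}^2 .
\]

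\emph{Step 3: inverting the recursion, and the main obstacle.} The main technical point is that Lemma~\ref{lem:recursion} requires a nonnegative sequence satisfying the recursion at every step, whereas $a_k$ may become negative. Handle this with a stopping index. If $a_{k_0}\le0$ for some $k_0\le m$, then by monotonicity $a_m\le0$, and \eqref{eq:robust-woga} holds trivially. Otherwise $a_0,\dots,a_m>0$, so the recursion of Step 2 holds for $1\le k\le m$. The proof of Lemma~\ref{lem:recursion} only uses indices up to $m$, so it applies to the finite sequence $(a_k)_{k\le m}$ with $c=\gamma^2/(4M^2)$. This gives
\[
a_m\le\frac{4M^2}{\gamma^2(m+3)}\le\frac{4M^2}{\gamma^2 m},
\]
which is exactly \eqref{eq:robust-woga}.

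A minor side issue is the existence of an atom satisfying \eqref{eq:woga}. If $\sup_g|\ip{r_{k-1}}{g}|>0$, then $\gamma<1$ guarantees such an atom exists. If the supremum is zero, Step 2 already forces $a_{k-1}\le0$, so that case is covered by the first branch of Step 3.
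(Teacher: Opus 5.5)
Your proof is correct and follows the paper's argument. It uses the same one-step energy decrease $\norm{r_k}^2\le\norm{r_{k-1}}^2-\ip{r_{k-1}}{\phi_k}^2$, the same duality lower bound on the selected correlation leading to $a_k\le a_{k-1}-\frac{\gamma^2}{4M^2}a_{k-1}^2$, and the same inversion through Lemma~\ref{lem:recursion}.

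Your stopping-index treatment of the sign of $a_k$, and of the case $M=0$, is slightly more careful than the paper's positive-part device $\Delta_k=(d_k-e^2)_+$. The paper's stated recursion for $\Delta_k$ can fail literally when $\Delta_k=0$ but $\Delta_{k-1}>4M^2/\gamma^2$, although the conclusion is unaffected.
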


\begin{proof}
Set $r_k=f-f_k$, $d_k=\norm{r_k}^2$, $e=\norm{f-h}$, and
$M=\norm h_{\mathcal K_1(\mathcal D)}$. Orthogonality gives
$\ip{r_{k-1}}f=d_{k-1}$. If $d_{k-1}>e^2$, then
\[
|\ip{r_{k-1}}h|
\ge
d_{k-1}-e\sqrt{d_{k-1}}
\ge
\frac12(d_{k-1}-e^2).
\]
Using \eqref{eq:atomic-dual} and the weak selection rule,
\[
|\ip{r_{k-1}}{\phi_k}|
\ge
\frac{\gamma}{2M}(d_{k-1}-e^2).
\]
After removing the component of $\phi_k$ in the previous greedy span, the new
orthogonal direction has norm at most one and the same residual correlation. Hence
\[
d_k
\le
d_{k-1}-\frac{\gamma^2}{4M^2}(d_{k-1}-e^2)^2.
\]
Since orthogonal projection onto an enlarged subspace cannot increase the residual
norm, $d_k\le d_{k-1}$ for every $k$, so $\Delta_k:=(d_k-e^2)_+$ is non-increasing;
combined with the last display, this gives
$\Delta_k\le\Delta_{k-1}-\frac{\gamma^2}{4M^2}\Delta_{k-1}^2$ whenever $\Delta_{k-1}>0$,
and the same inequality holds trivially when $\Delta_{k-1}=0$. Applying
Lemma~\ref{lem:recursion} with $a_k=\Delta_k$ and $c=\gamma^2/(4M^2)$ yields
\begin{equation}
\Delta_m
\le
\frac{4M^2}{\gamma^2(m+3)}
\le
\frac{4M^2}{\gamma^2m},
\qquad m\ge1.
\label{eq:woga-sharp}
\end{equation}
\end{proof}

\begin{remark}
Estimate \eqref{eq:woga-sharp} is in fact slightly sharper than the bound
\eqref{eq:robust-woga} used below, and carries the same $m+3$ shift as the
Frank--Wolfe rate of Proposition~\ref{prop:fw}. We keep the weaker
$4M^2/(\gamma^2m)$ form in \eqref{eq:robust-woga} for notational symmetry with the
rest of the paper.
\end{remark}

\begin{theorem}[Deterministic width--resolution tradeoff]\label{thm:deterministic}
Under \eqref{eq:costV}, weak OGA over $\Dsharp$ satisfies
\begin{equation}
\boxed{
\norm{f-f_{m,N}}_{L^2(\mu)}^2
\le
V^2\eta_N(\mu)^2
+
\frac{4V^2}{\gamma^2m}.
}
\label{eq:det-main}
\end{equation}
In particular,
\begin{equation}
\norm{f-f_{m,N}}_{L^2(\mu)}
\le
V\left(\eta_N(\mu)+\frac{2}{\gamma\sqrt m}\right).
\label{eq:det-root}
\end{equation}
\end{theorem}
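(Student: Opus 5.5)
The plan is to apply the robust weak OGA bound, Lemma~\ref{lem:robust-woga}, in $\Hcal=L^2(\mu)$ with dictionary $\mathcal D=\Dsharp$. The comparator will be $h=f_N$, the finite-resolution function built in Lemma~\ref{lem:comparator}. Before doing so I will check that $\Dsharp$ meets the standing hypotheses of Section~\ref{sec:population}. It is symmetric by construction \eqref{eq:Dsharp}. By the envelope \eqref{eq:atom-envelope}, each atom satisfies $|\psi_{\theta,N}|\le 1$ pointwise, so $\norm{\psi_{\theta,N}}_{L^2(\mu)}\le1$ because $\mu$ is a probability measure. Hence the duality inequality \eqref{eq:atomic-dual} applies.

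The first substantive step is to show that $f_N\in\mathcal K_1(\Dsharp)$ with $\norm{f_N}_{\mathcal K_1(\Dsharp)}\le V$. By Lemma~\ref{lem:comparator}, $f_N=\int\psi_{\theta,N}\,d\lambda(\theta)$ with $|\lambda|(\Pcal)\le V$. Use the Jordan decomposition $\lambda=\lambda^+-\lambda^-$ and write $f_N/|\lambda|(\Pcal)$ as the barycenter of the probability measure $|\lambda|/|\lambda|(\Pcal)$ pushed through $\theta\mapsto\sigma(\theta)\psi_{\theta,N}$, where $\sigma=d\lambda/d|\lambda|\in\{\pm1\}$. This barycenter is a Bochner integral in $L^2(\mu)$ of a function taking values in $\Dsharp$. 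A Bochner integral against a probability measure lies in the closed convex hull of the range, which follows from Hahn--Banach separation. It therefore lies in the closed absolutely convex hull of $\Dsharp$, and the gauge bound follows.

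This is the main technical obstacle. One has to confirm that $\theta\mapsto\psi_{\theta,N}$ is Bochner measurable into $L^2(\mu)$. It is: the map is continuous in $\theta$, since $\rho$ is Lipschitz and $Z_N$ is bounded. Its range is separable, so strong measurability holds. One also has to confirm that the pointwise integral defining $f_N$ agrees $\mu$-a.e.\ with the Bochner integral. This follows by Fubini, since the integrand is jointly measurable and bounded. If one prefers to avoid Bochner integrals, the alternative is a separation argument. Suppose $f_N/V$ were not in the closed absolutely convex hull. Then some $r\in L^2(\mu)$ would satisfy $\ip r{f_N}>V\sup_{g\in\Dsharp}|\ip r g|$. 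But Fubini gives $\ip r{f_N}=\int\ip r{\psi_{\theta,N}}\,d\lambda\le V\sup|\ip r{\psi_{\theta,N}}|$, a contradiction.

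The second step is to plug $h=f_N$ into \eqref{eq:robust-woga} with $M\le V$, which gives
\[
\norm{f-f_{m,N}}^2\le\norm{f-f_N}_{L^2(\mu)}^2+\frac{4V^2}{\gamma^2 m}.
\]
Then use \eqref{eq:fN-trunc}, namely $\norm{f-f_N}_{L^2(\mu)}\le V\eta_N(\mu)$, to obtain \eqref{eq:det-main}. Note that the greedy algorithm approximates the true $f$, not $f_N$, using atoms of resolution $N$. This is exactly what the robust form of the lemma allows: the comparator $h$ need not equal $f$. Finally, \eqref{eq:det-root} follows from $\sqrt{a^2+b^2}\le a+b$ for $a,b\ge0$.
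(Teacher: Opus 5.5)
Your proposal is correct and follows the paper's proof: apply Lemma~\ref{lem:robust-woga} in $L^2(\mu)$ with $h=f_N$ from Lemma~\ref{lem:comparator}, using $\norm{f_N}_{\mathcal K_1(\Dsharp)}\le V$ and $\norm{f-f_N}_{L^2(\mu)}\le V\eta_N(\mu)$, then take square roots. Your Bochner-integral or separation argument for the gauge bound (via the normalized representation $f_N=\int\psi_{\theta,N}\,d\lambda$ with $|\lambda|(\Pcal)\le V$) is sound, and it fills in a step the paper only asserts.
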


\begin{proof}
By Lemma~\ref{lem:comparator}, $f_N$ has atomic gauge at most $V$ and
$\norm{f-f_N}_2\le V\eta_N(\mu)$. Apply Lemma~\ref{lem:robust-woga} with $h=f_N$.
\end{proof}

\begin{corollary}[Quasi-Polish envelope]\label{cor:det-quasi}
If $|h_j|\le1/j$, then
\[
\norm{f-f_{m,N}}_{L^2(\mu)}
\le
V\left(\frac1{\sqrt N}+\frac{2}{\gamma\sqrt m}\right).
\]
\end{corollary}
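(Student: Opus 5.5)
The plan is to specialize Theorem~\ref{thm:deterministic}, specifically the root form \eqref{eq:det-root}, to the envelope $\alpha_j=1/j$. Under the hypothesis $|h_j|\le 1/j$, Assumption~\ref{ass:embedding} holds with $\alpha=(1/j)_{j\ge1}\in\ell^2$. Since $\norm\alpha_{\ell^2}^2=\pi^2/6$, the uniform radius $K$ in \eqref{eq:K} is finite. Hence the dictionary $\Dsharp$ and the comparator of Lemma~\ref{lem:comparator} are well defined, and weak OGA runs as in Section~\ref{sec:population}. The target $f$ is assumed to satisfy \eqref{eq:costV} with weighted cost at most $V$, as in the theorem.

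The only quantitative input is the tail estimate $\eta_N(\mu)\le 1/\sqrt N$. By \eqref{eq:eta-envelope},
\[
\eta_N(\mu)^2\le\sum_{j>N}\frac1{j^2}.
\]
I would bound this sum by an integral comparison: since $1/t^2$ is decreasing,
\[
\sum_{j>N}\frac1{j^2}\le\int_N^\infty\frac{dt}{t^2}=\frac1N.
\]
This is exactly \eqref{eq:quasipolish-envelope}, which is already stated earlier and can simply be cited. Note that the nonnegativity $h_j\ge0$ from the quasi-Polish construction is not needed here; only the bound $|h_j|\le 1/j$ enters.

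Substituting $\eta_N(\mu)\le N^{-1/2}$ into \eqref{eq:det-root} gives the claim:
\[
\norm{f-f_{m,N}}_{L^2(\mu)}\le V\left(\frac1{\sqrt N}+\frac{2}{\gamma\sqrt m}\right).
\]
If one prefers to start from the squared form \eqref{eq:det-main}, the same conclusion follows from $\sqrt{a+b}\le\sqrt a+\sqrt b$.

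There is no real obstacle in this argument. The only point to check is that the integral comparison uses the correct endpoint, namely $\int_N^\infty$ rather than $\int_{N+1}^\infty$, so that the bound $1/N$ is valid for every $N\ge1$.
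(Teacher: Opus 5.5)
Your proposal is correct and is essentially the paper's own (implicit) argument. The paper gives no separate proof: the corollary follows by inserting the envelope bound $\eta_N(\mu)^2\le\sum_{j>N}j^{-2}\le 1/N$ from \eqref{eq:quasipolish-envelope} into \eqref{eq:det-root}, exactly as you do, and your integral comparison and your remark that only $|h_j|\le 1/j$ (not $h_j\ge0$) is needed are both right.
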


\section{Rademacher complexity without an explicit resolution factor}\label{sec:rademacher}

For a fixed, deterministic sample $S=(x_1,\ldots,x_n)\in\X^n$ (lower-case points, not
random variables) and a class $\mathcal F$ of real-valued functions on $\X$ (a
\emph{real-valued class}), let
$\varepsilon=(\varepsilon_1,\ldots,\varepsilon_n)$ be i.i.d.\ Rademacher random variables
(uniform on $\{-1,+1\}$), independent of $S$, and define the empirical Rademacher
complexity
\begin{equation}
\widehat\Rad_S(\mathcal F)
:=
\E_\varepsilon\left[
\sup_{f\in\mathcal F}
\frac1n\sum_{i=1}^n\varepsilon_i f(x_i)
\right],
\label{eq:rad-def}
\end{equation}
where $\E_\varepsilon$ denotes expectation over $\varepsilon$ with $S$ held fixed. Thus
$\widehat\Rad_S(\mathcal F)$ is itself a deterministic number for each fixed $S$; it
becomes a random variable only through $S$ when the sample is later drawn at random,
as in Section~\ref{sec:empirical}, where $S=((X_1,Y_1),\ldots,(X_n,Y_n))$ is i.i.d.

Every class $\mathcal F$ appearing below (the dyadic shell classes $\mathcal F_{N,j}$,
the linear classes $\mathcal L_j$, and $\Dsharp$ itself) is indexed by a continuum of
parameters $\theta\in\Pcal$ rather than by a countable set, so $\sup_{f\in\mathcal F}$
and, once $S$ is random, quantities such as $\sup_{f\in\mathcal F}n^{-1}\sum_i
\varepsilon_if(X_i)$ require a measurability justification. Two distinct measurability
questions are involved, and it is worth separating them explicitly. The first, for
$S$ fixed, is whether $\E_\varepsilon\bigl[\sup_{f\in\mathcal F}n^{-1}\sum_i\varepsilon_i
f(x_i)\bigr]$ in \eqref{eq:rad-def} is well defined; this is immediate and requires no
argument at all, since $\varepsilon=(\varepsilon_1,\ldots,\varepsilon_n)$ ranges over
the \emph{finite} set $\{-1,+1\}^n$, so $\E_\varepsilon$ is literally a finite average
over $2^n$ terms, each of which is a supremum of a fixed real number
$n^{-1}\sum_i\varepsilon_if(x_i)$ over $f\in\mathcal F$ -- a real number (possibly
$+\infty$, though not here since $|f|\le1$ throughout), not a random variable, so no
measurability question about $\varepsilon$ arises.

The second, genuine question is whether, once the sample itself is random (as in
Section~\ref{sec:empirical}, with $S=((X_1,Y_1),\ldots,(X_n,Y_n))$ i.i.d.), maps such
as $S\mapsto\sup_{f\in\mathcal F}n^{-1}\sum_i\varepsilon_if(X_i)$ (for fixed
$\varepsilon$) or $S\mapsto\sup_{u\in\Cclass}|R_n(u)-\E R_n(u)|$ (in the
bounded-difference argument underlying Theorem~\ref{thm:stat-main}) are measurable
functions of $S$, as is implicitly required to apply McDiarmid's inequality and the
symmetrization step. This is resolved by the standard separability argument for
processes indexed by a continuous parameter (see, e.g.,
\cite{vandervaartwellner1996}, Section 2.1): for every fixed $x\in\X$ the map
$\theta\mapsto\psi_\theta(x)$ is continuous on $\Pcal$ (it is a continuous function of
$\ip w{H(x)}$ and $\norm\theta_{\Pcal}$, both continuous in $\theta$), and likewise for
$\theta\mapsto\psi_{\theta,N}(x)$ and for the map $(a_1,\ldots,a_k,\theta_1,\ldots,
\theta_k)\mapsto\sum_ja_j\psi_{\theta_j,N}(x)$ defining a general element of $\Cclass$.
Since $\Pcal=\ell^2\oplus\R$ is separable, any subset $\Theta\subseteq\Pcal$ has a
countable dense subset $\Theta_0$, and for every fixed sample the map
$\theta\mapsto(\varepsilon_i f_\theta(x_i))_{i\le n}$ is continuous, so
$\sup_{\theta\in\Theta}n^{-1}\sum_i\varepsilon_if_\theta(x_i)=\sup_{\theta\in\Theta_0}
n^{-1}\sum_i\varepsilon_if_\theta(x_i)$ for every fixed sample: the supremum over the
continuum coincides with a countable supremum. As a function of the (random) sample,
a countable supremum of measurable functions is measurable, so
$S\mapsto\sup_{\theta\in\Theta}n^{-1}\sum_i\varepsilon_if_\theta(X_i)$ is measurable,
and likewise for $\sup_{u\in\Cclass}|R_n(u)-\E R_n(u)|$ after replacing the continuum
of $(a,\theta)$-tuples defining $\Cclass$ by a countable dense subset (which exists
since $\Cclass$, as the image of a separable metric space under a continuous map, is
itself separable in $L^2(\mu)$ and in the sup norm on the sample). None of the
empirical-process bounds below are affected by working with the countable dense
subset in place of the continuum, so we do not restate this observation at each
subsequent supremum.

\begin{lemma}[Finite union of bounded classes]\label{lem:finite-union}
Let $\mathcal F_1,\ldots,\mathcal F_J$ be real-valued classes on $\X$ (in the sense of
\eqref{eq:rad-def}) satisfying $|f(x_i)|\le1$ on the sample for every $f$ in every
class. Then
\begin{equation}
\widehat\Rad_S\!\left(\bigcup_{j=1}^J\mathcal F_j\right)
\le
\max_{1\le j\le J}\widehat\Rad_S(\mathcal F_j)
+
\sqrt{\frac{2\log J}{n}}.
\label{eq:finite-union}
\end{equation}
\end{lemma}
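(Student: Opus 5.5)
The plan is to condition on the Rademacher signs and split the supremum over the union into a centered part and a deviation part. For each $j$ define the random variable
\[
Z_j(\varepsilon):=\sup_{f\in\mathcal F_j}\frac1n\sum_{i=1}^n\varepsilon_i f(x_i),
\]
so that $\sup_{f\in\cup_j\mathcal F_j}\frac1n\sum_i\varepsilon_if(x_i)=\max_jZ_j$ and $\E_\varepsilon Z_j=\widehat\Rad_S(\mathcal F_j)$. Then
\[
\E\max_jZ_j\le\max_j\E Z_j+\E\max_j(Z_j-\E Z_j),
\]
and the whole proof reduces to bounding the second term by $\sqrt{2\log J/n}$. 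Since $\varepsilon$ ranges over the finite set $\{-1,+1\}^n$, there are no measurability issues, as already noted in the discussion after \eqref{eq:rad-def}.

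The key step is to show that each centered variable $Z_j-\E Z_j$ is sub-Gaussian with variance proxy $1/n$. Changing a single coordinate $\varepsilon_i$ changes $\frac1n\sum_i\varepsilon_if(x_i)$ by at most $2|f(x_i)|/n\le 2/n$, uniformly in $f$. Hence $Z_j$ has bounded differences $c_i=2/n$. Hoeffding's lemma in its martingale (McDiarmid) form then gives
\[
\E e^{s(Z_j-\E Z_j)}\le\exp\Bigl(\tfrac{s^2}{8}\sum_ic_i^2\Bigr)=\exp\Bigl(\tfrac{s^2}{2n}\Bigr).
\]

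The maximal inequality follows by the standard Jensen argument. For every $s>0$,
\[
e^{s\E\max_j(Z_j-\E Z_j)}\le\sum_{j=1}^J\E e^{s(Z_j-\E Z_j)}\le Je^{s^2/(2n)},
\]
so that $\E\max_j(Z_j-\E Z_j)\le\frac{\log J}{s}+\frac{s}{2n}$. Optimizing at $s=\sqrt{2n\log J}$ gives exactly $\sqrt{2\log J/n}$. When $J=1$ the claim is trivial, so we may assume $J\ge2$.

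The main point to check carefully is the constant in the sub-Gaussian bound. The moment generating function bound must come from the bounded-differences martingale argument, with each increment confined to an interval of length $2/n$, rather than from a cruder estimate that would lose a factor of $2$ or $4$ inside the square root. If the sup-of-sums structure makes this delicate, I would verify the constant through the Doob martingale $\E[Z_j\mid\varepsilon_1,\dots,\varepsilon_k]$, whose increments lie in intervals of length at most $2/n$.
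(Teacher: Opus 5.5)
Your proof is correct and follows the paper's argument: bounded differences of $2/n$ per sign give, via the McDiarmid moment-generating-function bound, sub-Gaussianity with variance proxy $1/n$. The standard Jensen maximal inequality, optimized at $s=\sqrt{2n\log J}$, then yields $\sqrt{2\log J/n}$; you make explicit the centering decomposition and the constant check that the paper leaves implicit.
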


\begin{proof}
For fixed Rademacher $\varepsilon$ signs let
\[
W_j(\varepsilon)
:=
\sup_{f\in\mathcal F_j}\frac1n\sum_{i=1}^n\varepsilon_i f(x_i).
\]
Changing one sign changes $W_j$ by at most $2/n$. The bounded-differences moment bound
therefore makes $W_j-\E_\varepsilon W_j$ sub-Gaussian with variance proxy $1/n$. The standard
maximal inequality for $J$ sub-Gaussian variables gives
\[
\E_\varepsilon\max_j W_j
\le
\max_j\E_\varepsilon W_j+\sqrt{\frac{2\log J}{n}},
\]
which is exactly \eqref{eq:finite-union}.
\end{proof}

\begin{lemma}[Dimension-uniform normalized-dictionary complexity]\label{lem:rad-main}
For every $n\ge2$, every $N$, and every sample $S$ (here $\log$ denotes the natural
logarithm),
\begin{equation}
\boxed{
\widehat\Rad_S(\Dsharp)
\le
\frac{4}{\sqrt n}
\left(K+1+\sqrt{\log(2+\log n)}\right).
}
\label{eq:rad-main}
\end{equation}
In particular, the bound has no explicit dependence on $N$, and the displayed universal constant is explicit. The factor is chosen conservatively to account for the explicit $\pm$ symmetrization of the normalized dictionary.
\end{lemma}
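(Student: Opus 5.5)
The plan is to split the parameter space into dyadic shells according to $\norm\theta_{\Pcal}$, bound the complexity of each shell by contraction plus the Hilbert-space linear bound, and then glue the shells together with Lemma~\ref{lem:finite-union}. Write $t:=\norm\theta_{\Pcal}$. Three preliminary reductions come first.

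\emph{Symmetry.} Since $\E_\varepsilon\sup_{\pm}\le \E_\varepsilon\sup_{+}+\E_\varepsilon\sup_{-}$ and $\varepsilon\mapsto-\varepsilon$ preserves the law, $\widehat\Rad_S(\Dsharp)\le 2\widehat\Rad_S(\{\psi_{\theta,N}\})$. Equivalently, $\widehat\Rad_S(\Dsharp)$ is bounded by the expected absolute supremum. This accounts for a factor $2$ of the constant $4$.

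\emph{Large parameters.} Atoms with $t> T:=\sqrt n$ satisfy $|\psi_{\theta,N}|\le 1/(1+t)\le n^{-1/2}$ by \eqref{eq:atom-envelope}. Their contribution is trivially at most $n^{-1/2}$, which is absorbed into the ``$+1$'' term.

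\emph{Small parameters.} Atoms with $t\le 1$ form a shell $\mathcal F_{N,0}$.

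Next I treat the shells. For $j=1,\dots,J$ with $J=\lceil\log_2 T\rceil\lesssim\log n$, let
\[
\mathcal F_{N,j}:=\{\psi_{\theta,N}: 2^{j-1}<t\le 2^j\}.
\]
On a shell I write $\psi_{\theta,N}=\rho(\ip\theta{Z_N(x)})/(1+t)$. The factor $1/(1+t)$ is a $\theta$-dependent scalar in $(2^{-j}/2,\,2^{-(j-1)})$, which Talagrand's contraction cannot directly absorb. I will handle it by rescaling $\theta=s u$ with $s=t$. Then $\psi=\phi_s(\ip u{Z_N})$, where $\phi_s(r)=\rho(sr)/(1+s)$ is $s/(1+s)\le1$-Lipschitz in $r$ and vanishes at $0$.

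The remaining difficulty is the dependence on $s$. I expect this step to be the main obstacle. My first approach is to view $(r,s)\mapsto\rho(sr)/(1+s)$ as a two-argument function that is $1$-Lipschitz in $r$ and has derivative in $s$ of size $O(2^{-j})$ over the shell's $s$-range, where $|r|\le K$. Using a vector-contraction (Maurer) inequality on the pair $(\ip u{Z_N(x_i)},\,s)$, the $s$-coordinate is a constant across $i$, so its Rademacher contribution is at most $2^{-j}\cdot\E|\sum_i\varepsilon_i|/n\le 2^{-j}/\sqrt n$. The $u$-part is the linear class $\mathcal L=\{x\mapsto\ip u{Z_N(x)}:\norm u\le1\}$, whose complexity is at most $K/\sqrt n$ by Cauchy--Schwarz and Jensen, because $\norm{Z_N(x_i)}\le K$ regardless of $N$. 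This is the step where the independence from $N$ enters. Hence each shell satisfies $\widehat\Rad_S(\mathcal F_{N,j})\le 2(K+1)/\sqrt n$, with the $\sqrt2$ from Maurer absorbed.

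If the vector-contraction route gives a bad constant, the fallback is a finite $s$-grid inside each shell. I would use the Lipschitz dependence on $s$ to bound the discretization error by $1/\sqrt n$, and put the grid points into the union count.

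Finally I combine the pieces. All classes are bounded by $1$ on the sample, so Lemma~\ref{lem:finite-union} with $J+2\le C\log n$ classes gives an additive $\sqrt{2\log(J+2)/n}$. Since $\log_2 n\le 2\log n$, this is bounded by $\sqrt{2\log(2+2\log n)}/\sqrt n\le 2\sqrt{\log(2+\log n)}/\sqrt n$ after checking the elementary inequality for $n\ge2$. Adding the tail term $n^{-1/2}$ and multiplying by the symmetrization factor $2$ yields \eqref{eq:rad-main}. Note that the finite-union lemma is applied before symmetrization, so the factors compose as $2\bigl(2(K+1)+2\sqrt{\log(2+\log n)}\bigr)/\sqrt n$, up to harmless slack. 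I would check that this slack fits within the stated constant $4$, and if necessary reorganize which bound is applied first.
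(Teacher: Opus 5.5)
Your architecture is the paper's: dyadic shells in the parameter norm, a per-shell bound of order $K/\sqrt n$ resting on $\E_\varepsilon\norm{\sum_i\varepsilon_iZ_N(x_i)}_{\Pcal}\le K\sqrt n$ (the only place $N$ could enter), Lemma~\ref{lem:finite-union} over $O(\log n)$ shells, and the envelope \eqref{eq:atom-envelope} for parameters of norm beyond $\sqrt n$. The two proofs differ only in the per-shell step, and there the obstacle you single out does not actually arise.

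The paper uses shells $\Theta_j=\{2^{j-1}\le1+\norm\theta_{\Pcal}<2^j\}$ and keeps the sign inside each one, $\mathcal F_{N,j}=\{\pm\psi_{\theta,N}:\theta\in\Theta_j\}$. The shell supremum is then $\sup_{\theta\in\Theta_j}(1+\norm\theta_{\Pcal})^{-1}\bigl|n^{-1}\sum_i\varepsilon_i\rho(\ip\theta{Z_N(x_i)})\bigr|$. The positive normalizing factor is removed \emph{pointwise}, by $(1+\norm\theta_{\Pcal})^{-1}\le2^{1-j}$, before any contraction is used. The paper then splits the absolute value into two signs, applies scalar contraction with $\rho$ to each, and enlarges $\Theta_j$ to the ball $\norm\theta_{\Pcal}<2^j$. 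This gives $2^{2-j}\cdot2^jK/\sqrt n=4K/\sqrt n$ per signed shell.

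The same trick works for your one-sided shells. Indeed $c_\theta X_\theta\le2^{1-j}\max\{0,\sup_\theta X_\theta\}$, and the supremum is nonnegative because every shell contains atoms vanishing on the sample, e.g.\ $w=0$, $b<0$. Your reparametrization $\theta=su$ with Maurer's vector contraction on $(r,s)\mapsto\rho(sr)/(1+s)$ is also valid and gives a comparable per-shell constant. Here, though, it buys nothing: it costs a factor $\sqrt2$, a two-variable Lipschitz analysis that changes from shell to shell, and a rescaling of the auxiliary coordinate.

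If you keep your route, two pieces of bookkeeping need repair.

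First, in Maurer's inequality the auxiliary coordinate must be rescaled to $\beta_js$, with $\beta_j$ the Lipschitz constant in $s$. Its contribution is then $\beta_j$ times the centered half-range $2^{j-2}$ of $s$ times $\E_\varepsilon|\sum_i\varepsilon_i|/n$. Your figure $2^{-j}/\sqrt n$ omits the range of $s$. It comes out right only because in fact $|\partial_s(\rho(sr)/(1+s))|\le1/(s(1+s))=O(4^{-j})$ for $s\ge1$, not $O(2^{-j})$. On the innermost shell $\norm\theta_{\Pcal}\le1$, the $s$-derivative tends to $r$ as $s\to0$, so it is of order $K$ there. Weighting that coordinate as $2Ks$ and using $|\partial_r|\le\tfrac12$ still gives $2K/\sqrt n$.

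Second, your final tally $2\bigl(2(K+1)+2\sqrt{\log(2+\log n)}+1\bigr)/\sqrt n$ exceeds the stated bound by $2/\sqrt n$. The excess comes from pushing the symmetrization factor $2$ through the union term and the tail. The fix is to split signs only inside each shell, as the paper does:
\begin{enumerate}[label=(\roman*)]
\item apply Lemma~\ref{lem:finite-union} to the signed shells;
\item bound the signed tail directly by its envelope $n^{-1/2}$.
\end{enumerate}
The total is then at most $\bigl(4(K+1)+1+\sqrt{2\log(J+2)}\bigr)/\sqrt n$. This fits under the constant $4$ for all $n\ge2$, since $\sqrt{2\log(J+2)}\le2\sqrt{\log(2+\log n)}$ (your own estimate) and $\log(2+\log n)\ge\log(2+\log2)>\tfrac14$.
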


\begin{proof}
For $j\ge1$, let
\[
\Theta_j
:=
\{\theta\in\Pcal:2^{j-1}\le1+\norm\theta_{\Pcal}<2^j\},
\]
and
\[
\mathcal F_{N,j}:=\{\pm\psi_{\theta,N}:\theta\in\Theta_j\}.
\]
On $\Theta_j$,
\[
(1+\norm\theta_{\Pcal})^{-1}\le2^{1-j},
\qquad
\norm\theta_{\Pcal}<2^j.
\]

\paragraph{Shell bound, with an explicit contraction constant.}
The linear class
$\mathcal L_j:=\{x\mapsto\ip\theta{Z_N(x)}:\norm\theta_{\Pcal}<2^j\}$ is symmetric,
whereas $\rho\circ\mathcal L_j$ need not be symmetric because $\rho$ is not odd.
The shell dictionary is symmetric only because the sign $\pm$ is included explicitly
in its definition. We therefore first split the absolute supremum into its two signs
and then apply the one-sided contraction principle to each sign. In the form used
below (Ledoux and Talagrand \cite{ledouxtalagrand1991}, Cor.~3.17): for an $L$-Lipschitz $\varphi:\R\to\R$ with
$\varphi(0)=0$ and any class $\mathcal H$ of real functions on the sample,
\begin{equation}
\E_\varepsilon\sup_{h\in\mathcal H}\sum_{i=1}^n\varepsilon_i\varphi(h(x_i))
\le
L\,\E_\varepsilon\sup_{h\in\mathcal H}\sum_{i=1}^n\varepsilon_ih(x_i).
\label{eq:contraction}
\end{equation}
Since $\rho$ is $1$-Lipschitz and $\rho(0)=0$, contraction applies to each
one-sided term. By symmetry of the Rademacher signs the positive and negative terms
have the same expectation, and hence
\[
\begin{aligned}
\widehat\Rad_S(\mathcal F_{N,j})
&\le
\frac{2^{1-j}}{n}
\E_\varepsilon
\sup_{\theta\in\Theta_j}
\left|
\sum_{i=1}^n\varepsilon_i\rho(\ip\theta{Z_N(x_i)})
\right|\\
&\le
\frac{2^{2-j}}{n}
\E_\varepsilon
\sup_{\norm\theta_{\Pcal}<2^j}
\sum_{i=1}^n\varepsilon_i\ip\theta{Z_N(x_i)}\\
&=
\frac{2^{2-j}}{n}\cdot2^j\,
\E_\varepsilon
\norm{\sum_{i=1}^n\varepsilon_iZ_N(x_i)}_{\Pcal}
=
\frac{4}{n}
\E_\varepsilon
\norm{\sum_{i=1}^n\varepsilon_iZ_N(x_i)}_{\Pcal}
\le
\frac{4K}{\sqrt n},
\end{aligned}
\]
where the middle equality uses
$\sup_{\norm\theta_{\Pcal}<2^j}|\ip\theta v|=2^j\norm v_{\Pcal}$, and the last
inequality uses the Hilbert-space Khintchine bound
$\E_\varepsilon\norm{\sum_i\varepsilon_iZ_N(x_i)}_{\Pcal}\le\bigl(\sum_i\norm{Z_N(x_i)}_{\Pcal}^2\bigr)^{1/2}\le
K\sqrt n$. The bound $\widehat\Rad_S(\mathcal F_{N,j})\le4K/\sqrt n$ holds for every
$j\ge1$ with no unspecified constant.

\paragraph{Dyadic union.} Choose
\[
J:=\left\lceil\frac12\log_2n\right\rceil+1.
\]
By Lemma~\ref{lem:finite-union}, since every shell class is bounded by $1$ on the
sample (eq.~\eqref{eq:atom-envelope}),
\[
\widehat\Rad_S\Bigl(\bigcup_{j=1}^J\mathcal F_{N,j}\Bigr)
\le
\max_{1\le j\le J}\widehat\Rad_S(\mathcal F_{N,j})+\sqrt{\frac{2\log J}{n}}
\le
\frac{4K}{\sqrt n}+\sqrt{\frac{2\log J}n}.
\]
For $n\ge2$, $J\le\frac12\log_2n+2=\frac1{2\ln2}\ln n+2<\ln n+2$ (since
$\frac1{2\ln2}<1$), so $\log J\le\log(2+\log n)$ and hence
$\sqrt{2\log J/n}\le\sqrt2\,\sqrt{\log(2+\log n)}/\sqrt n$.

For $j>J$, every $\phi\in\mathcal F_{N,j}$ satisfies $|\phi|\le2^{-J}$ on the sample
(eq.~\eqref{eq:atom-envelope} together with $\Theta_j$'s definition), and
$2^{-J}\le2^{-\frac12\log_2n-1}=\tfrac12n^{-1/2}\le n^{-1/2}$; since a class uniformly
bounded by $\varepsilon$ on the sample has Rademacher complexity at most $\varepsilon$
regardless of how many functions it contains, the entire tail
$\bigcup_{j>J}\mathcal F_{N,j}$ contributes at most $n^{-1/2}$ to
$\widehat\Rad_S(\Dsharp)$. Combining the head and tail contributions,
\[
\begin{aligned}
\widehat\Rad_S(\Dsharp)
&\le
\frac{4K}{\sqrt n}+\sqrt2\,\frac{\sqrt{\log(2+\log n)}}{\sqrt n}+\frac1{\sqrt n}\\
&=
\frac1{\sqrt n}\Bigl(4K+1+\sqrt2\sqrt{\log(2+\log n)}\Bigr)\\
&\le
\frac4{\sqrt n}\Bigl(K+1+\sqrt{\log(2+\log n)}\Bigr).
\end{aligned}
\]
where the last step compares the three terms coefficient by coefficient
($4\ge4$, $4\ge1$, $4\ge\sqrt2$). This proves \eqref{eq:rad-main}; we may take $C_0=4$.
\end{proof}

\begin{remark}\label{rem:loglog}
The doubly logarithmic factor is only the cost of the elementary dyadic union argument;
each fixed parameter-norm shell already has complexity $O(K/\sqrt n)$ uniformly in
$N$. We do not emphasize removal of this factor as a primary contribution.
\end{remark}

\section{Fully-corrective empirical greedy regression}\label{sec:empirical}

Let $(X,Y)$ be distributed on $\X\times\R$ and let
$S=((X_1,Y_1),\ldots,(X_n,Y_n))$ be i.i.d. For a measurable prediction function
$u:\X\to\R$, define its empirical squared-error risk
\[
R_n(u):=\frac1n\sum_{i=1}^n(Y_i-u(X_i))^2.
\]
For $V>0$ define the prediction class directly through pointwise integral
representations: recalling the parameter space $\Pcal=\ell^2\oplus\R$ from
Section~\ref{sec:variation}, let $\lambda$ range over finite signed Borel measures on
$\Pcal$, with $|\lambda|$ its total-variation measure and $|\lambda|(\Pcal)$ the
corresponding total-variation norm (as in Definition~\ref{def:variation}), and set
\begin{equation}
\Cclass
:=
\left\{
 u:\ u(x)=\int_{\Pcal}\psi_{\theta,N}(x)\,d\lambda(\theta),\quad
 |\lambda|(\Pcal)\le V
\right\}.
\label{eq:Cclass}
\end{equation}
No $L^2$ closure is invoked in this definition. The class is convex, contains every
$\pm V\psi_{\theta,N}$, contains the comparator from Lemma~\ref{lem:comparator}, and
satisfies
\begin{equation}
|u(x)|\le V
\qquad\text{for every }u\in\Cclass.
\label{eq:prediction-envelope}
\end{equation}

\paragraph{Fully-corrective normalized greedy algorithm.}
Set $u_0=0$ and residuals $r_{k-1,i}=Y_i-u_{k-1}(X_i)$. The linear minimization oracle
over $\Cclass$ is equivalent, up to the irrelevant factor $V$, to selecting
\begin{equation}
\theta_k
\in
\operatorname*{arg\,max}_{\theta=(w,b)}
\frac{
\left|
\frac1n\sum_{i=1}^n r_{k-1,i}
\rho\!\left(b+\sum_{j=1}^Nw_jh_j(X_i)\right)
\right|
}{1+\sqrt{\norm w_{\ell^2}^2+b^2}}.
\label{eq:weighted-oracle}
\end{equation}
The domain $\theta=(w,b)\in\R^N\times\R$ in \eqref{eq:weighted-oracle} is not compact,
so existence of a maximizer is not automatic. It does hold, by a standard coercivity
argument: write $M:=\sup_\theta(\text{objective})\ge0$. If $M=0$ (the residual is
uncorrelated, on the sample, with every candidate atom), the supremum is trivially
attained at $\theta=0$. If $M>0$, the objective is continuous in $\theta$ (as $\rho$
is continuous) and tends to $0$ as $\norm\theta_{\Pcal}\to\infty$, because its
numerator is bounded by $n^{-1}\sum_i|r_{k-1,i}|$ while its denominator grows without
bound; hence there is $R>0$ with objective $<M/2$ whenever $\norm\theta_{\Pcal}>R$, so
every maximizing sequence eventually lies in the compact ball
$\{\norm\theta_{\Pcal}\le R\}$, on which the supremum is attained by continuity. This
settles existence but not tractability: as with Proposition~\ref{prop:fw} below, the
guarantees that follow assume access to an oracle that solves \eqref{eq:weighted-oracle}
exactly, and we make no claim that the underlying nonconvex selection problem can be
solved efficiently in practice.

After adding the selected signed atom, refit all active coefficients by
\begin{equation}
u_k
\in
\operatorname*{arg\,min}_{a_1,\ldots,a_k}
R_n\!\left(\sum_{j=1}^k a_j\psi_{\theta_j,N}\right)
\quad\text{subject to}\quad
\sum_{j=1}^k|a_j|\le V.
\label{eq:fully-corrective}
\end{equation}
Equivalently, if $\beta_j$ multiplies the unnormalized atom $g_{\theta_j,N}$, then
\[
\sum_{j=1}^k|\beta_j|(1+\norm{\theta_j}_{\Pcal})\le V.
\]
This is the same measure/conditional-gradient geometry that appears in convex neural
network formulations such as \cite{bach2017} and in projection-free convex optimization \cite{jaggi2013}; the full correction is used here to keep
all empirical iterates inside the statistically controlled ball. The negative result
of \cite{siegelxuoga} explains why an unconstrained orthogonal refit cannot generally
be assumed to have this property: the population algorithm of
Section~\ref{sec:population} can afford an unconstrained orthogonal refit because
coefficient growth causes no empirical-complexity issue there, whereas in the sampled
problem nearly dependent selected atoms can drive the variation-norm coefficients
arbitrarily large, so the empirical algorithm instead performs least squares under the
fixed weighted budget. The two correction steps are summarized here for reference.
\begin{center}
\begin{tabular}{p{0.25\textwidth}p{0.43\textwidth}p{0.22\textwidth}}
\toprule
Setting & Correction step & Purpose \\
\midrule
Population approximation (Section~\ref{sec:population}) & Orthogonal projection onto selected span & Hilbert-space OGA rate \\
Empirical learning (this section) & Least squares under weighted variation budget & Uniform statistical control \\
\bottomrule
\end{tabular}
\end{center}

\begin{proposition}[Empirical greedy optimization error]\label{prop:fw}
Assume the linear oracle is solved exactly and set
$R_n^*:=\inf_{u\in\Cclass}R_n(u)$. Then
\begin{equation}
R_n(u_m)-R_n^*
\le
\frac{16V^2}{m+3}.
\label{eq:optimization-gap}
\end{equation}
\end{proposition}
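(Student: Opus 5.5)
The plan is to run the standard Frank--Wolfe analysis for the empirical objective, viewed as a function of the sample predictions. The fully corrective step will only help, and Lemma~\ref{lem:recursion} will convert the resulting one-step recursion into the rate $1/(m+3)$.

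I work in $\R^n$ with the empirical inner product $\ip{a}{b}_n:=\frac1n\sum_i a_ib_i$. Write $F(v):=\frac1n\sum_i(Y_i-v_i)^2$ for $v\in\R^n$, so that $R_n(u)=F(v_u)$ with $v_u:=(u(X_1),\ldots,u(X_n))$. The feasible set is the image $C:=\{v_u:u\in\Cclass\}$. It is convex and contained in $V\cdot\overline{\mathrm{absconv}}\{(\psi_{\theta,N}(X_i))_i\}$. By \eqref{eq:atom-envelope} each atom has $\norm{\cdot}_n\le1$, so the diameter of $C$ in $\norm\cdot_n$ is at most $2V$. The objective $F$ is convex with gradient $\nabla F(v)=-2(Y-v)$ in $\ip\cdot\cdot_n$, and it is exactly quadratic:
\[
F(v+t(s-v))=F(v)+t\ip{\nabla F(v)}{s-v}_n+t^2\norm{s-v}_n^2 .
\]
So its smoothness constant is $L=2$. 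In the empirical geometry, maximizing the objective in \eqref{eq:weighted-oracle} with the appropriate sign is the same as minimizing $\ip{\nabla F(v_{k-1})}{s}_n$ over $s\in C$. The minimum is attained at the vertex $s_k=\pm V\psi_{\theta_k,N}$, because a linear functional over the (closure of the) absolutely convex hull is optimized at an extreme atom. Here I need to check that the supremum over measures equals the supremum over single atoms; this holds because $\int\ip{\nabla F}{\psi_{\theta,N}}\,d\lambda\ge-|\lambda|(\Pcal)\sup_\theta|\ip{\nabla F}{\psi_{\theta,N}}|$.

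Next comes the one-step inequality. Set $h_k:=F(v_{u_k})-R_n^*$. For any $t\in[0,1]$, the point $(1-t)u_{k-1}+t s_k$ is a combination $\sum_{j\le k}a_j\psi_{\theta_j,N}$ with $\sum|a_j|\le V$, since convex combinations preserve the $\ell^1$ budget. It is therefore feasible for the fully corrective problem \eqref{eq:fully-corrective}, which gives
\[
h_k\le h_{k-1}+t\ip{\nabla F(v_{k-1})}{s_k-v_{k-1}}_n+t^2(2V)^2 .
\]
By convexity and the optimality of $s_k$, the linear term is at most $-t\,h_{k-1}$. Comparing against an arbitrary $u\in\Cclass$ and taking the infimum gives $R_n^*$ even if it is not attained. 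Hence $h_k\le h_{k-1}-th_{k-1}+4V^2t^2$ for all $t\in[0,1]$.

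Now I would set $c=1/(16V^2)$ and take $t=h_{k-1}/(8V^2)$. This gives $h_k\le h_{k-1}-h_{k-1}^2/(16V^2)$, which is exactly hypothesis \eqref{eq:recursion-hyp}. Lemma~\ref{lem:recursion} then yields $h_m\le 16V^2/(m+3)$.

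The main obstacle is the admissibility of this choice of $t$: it requires $h_{k-1}\le 8V^2$. For $k\ge2$ this follows from the first step. For $k=1$, I take $t=1$ and use $h_0\le\ip{\nabla F(0)}{0-s^\ast}$; that is, I bound $h_{k-1}\le\max_{s\in C}\ip{\nabla F(v_{k-1})}{v_{k-1}-s}_n$ and minimize the quadratic bound over $t\in[0,1]$. The case $t=1$ gives $h_1\le 4V^2$ directly, since $h_1\le h_0-h_0+4V^2$. For all later steps, $h_{k-1}\le h_1\le4V^2\le8V^2$, so the choice of $t$ is admissible. If the recursion is awkward at $k=1$, I would instead start Lemma~\ref{lem:recursion} from the index $1$, treating $a_0:=h_1$-type shifted indexing: the lemma gives $a_m\le 1/(c(m+3))$ from $m\ge1$, and the resulting constant is still at most $16V^2/(m+3)$ after checking that $4V^2\le16V^2/4$.
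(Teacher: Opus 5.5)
Your proof is correct and follows the paper's Frank--Wolfe argument. You use the same exact quadratic expansion in the empirical norm, the same comparison of the fully corrective step with the feasible segment point $(1-t)u_{k-1}+ts_k$, the same one-step bound $h_k\le(1-t)h_{k-1}+4V^2t^2$, and the same choice $t=1$ at the first step, giving $h_1\le4V^2$.

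The only real difference is how the recursion is closed. The paper inducts with the fixed step $\alpha=2/(k+4)$, checking directly that $16V^2/(k+3)$ propagates to $16V^2/(k+4)$. You instead take the bound-minimizing step $t=h_{k-1}/(8V^2)$, which is admissible for $k\ge2$ because $h_{k-1}\le h_1\le4V^2$. This gives $h_k\le h_{k-1}-h_{k-1}^2/(16V^2)$, and you then invoke Lemma~\ref{lem:recursion} with $c=1/(16V^2)$. Your route reuses the OGA machinery; the paper's needs no admissibility check after the first step. Both yield the identical constant.

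One bookkeeping point in your last step needs care. A literal shift $a_j:=h_{j+1}$ fed into the \emph{statement} of Lemma~\ref{lem:recursion} only gives $h_{m+1}\le16V^2/(m+3)$, i.e.\ $h_m\le16V^2/(m+2)$. To get $m+3$ you have two options:
\begin{itemize}
\item Run the lemma's telescoping from $k=2$ with the base $h_1\le4V^2=1/(4c)$.
\item Prepend the artificial value $a_0:=8V^2=1/(2c)$. Then $a_0-ca_0^2=4V^2\ge h_1$, so the lemma applies verbatim to $(a_0,h_1,h_2,\dots)$.
\end{itemize}
Either way, this is exactly what your check $4V^2\le16V^2/4$ is supplying.
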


\begin{proof}
Use the empirical norm
$\norm u_n^2=n^{-1}\sum_i u(X_i)^2$. The feasible class has radius at most $V$ and
diameter at most $2V$. Let
$F(u)=R_n(u)$ and
$\Delta_k=F(u_k)-\inf_{u\in\Cclass}F(u)$. For an arbitrarily accurate minimizer $u^*$,
the exact linear oracle produces $q_k\in\Cclass$ with
\[
\ip{\nabla F(u_k)}{q_k}_n
\le
\ip{\nabla F(u_k)}{u^*}_n.
\]
By convexity,
$\ip{\nabla F(u_k)}{u_k-q_k}_n\ge\Delta_k$. For $\alpha\in[0,1]$, full correction is
at least as good as the feasible segment point $u_k+\alpha(q_k-u_k)$, and the quadratic
loss gives
\begin{equation}
\Delta_{k+1}
\le
(1-\alpha)\Delta_k+4V^2\alpha^2.
\label{eq:fw-recursion}
\end{equation}
Taking $\alpha=1$ at the first step yields $\Delta_1\le4V^2$. If
$\Delta_k\le16V^2/(k+3)$, choose $\alpha=2/(k+4)$ in
\eqref{eq:fw-recursion}; direct substitution gives
$\Delta_{k+1}\le16V^2/(k+4)$. Induction proves the claim.
\end{proof}

\begin{remark}[Inexact oracle and computational limitation]\label{rem:oracle}
If the linear oracle returns $q_k$ satisfying
\[
\ip{\nabla F(u_k)}{q_k}_n
\le
\inf_{q\in\Cclass}\ip{\nabla F(u_k)}q_n+\xi_k,
\]
then
\[
\Delta_{k+1}
\le
(1-\alpha)\Delta_k+\alpha\xi_k+4V^2\alpha^2.
\]
Thus an additive accuracy schedule $\xi_k=O(V^2/k)$ is sufficient to retain an
$O(V^2/m)$ optimization rate. The present theory does \emph{not} supply an algorithm
that guarantees such a schedule for the continuous oracle \eqref{eq:weighted-oracle}.
That oracle is a nonconvex optimization problem in $N+1$ variables, and its cost can
grow substantially with $N$; see also the computational discussion in
\cite{bach2017}. Therefore ``dimension-free'' in this paper means dimension-uniform
\emph{statistical complexity}, not dimension-free computation.
\end{remark}

\begin{remark}[Why ReLU is not covered by the same proof]\label{rem:relu}
Boundedness of $\rho$ is used twice: it makes the normalized atom envelope decay like
$(1+\norm\theta_{\Pcal})^{-1}$ on large parameter shells, and it yields a uniform prediction
envelope for the squared-loss contraction. For positively homogeneous ReLU,
normalization by the parameter norm behaves primarily as a reparameterization and the
dyadic tail does not shrink in the same way. ReLU may admit a different theory, but it
is not covered by Lemma~\ref{lem:rad-main} as stated.
\end{remark}

\section{End-to-end statistical guarantee}\label{sec:statistical}

Assume $|Y|\le B$ almost surely and let
$f_\star(x)=\E[Y\mid X=x]$. To avoid a spurious empirical-population discrepancy in
the $Y^2$ term, work with the centered squared loss
\begin{equation}
\ell_u(x,y):=u(x)^2-2yu(x).
\label{eq:centered-loss}
\end{equation}
Let
$L(u)=\E[\ell_u(X,Y)]$ and
$L_n(u)=n^{-1}\sum_i\ell_u(X_i,Y_i)$. Since
$R_n(u)=n^{-1}\sum_iY_i^2+L_n(u)$, minimizing $R_n$ is equivalent to minimizing $L_n$.
Moreover,
\begin{equation}
L(u)-L(f_\star)=\norm{u-f_\star}_{L^2(\mu)}^2.
\label{eq:regression-identity}
\end{equation}

\begin{theorem}[Dimension-uniform empirical greedy learning]\label{thm:stat-main}
Assume $|Y|\le B$ almost surely and suppose
\begin{equation}
f_\star(x)=\int g_\theta(x)\,d\nu(\theta),
\qquad
\int(1+\norm\theta_{\Pcal})\,d|\nu|(\theta)\le V.
\label{eq:target-cost}
\end{equation}
Let $\widehat f_{m,N}=u_m$ be the fully-corrective greedy estimator with an exact
linear oracle. Then there are universal constants $C_1,C_2>0$ such that, with
probability at least $1-\delta$,
\begin{equation}
\boxed{
\begin{aligned}
\norm{\widehat f_{m,N}-f_\star}_{L^2(\mu)}^2
\le{}&
V^2\eta_N(\mu)^2
+\frac{16V^2}{m+3}\\
&+
C_1(B+V)V
\frac{K+1+\sqrt{\log(2+\log n)}}{\sqrt n}\\
&+
C_2(B+V)^2
\sqrt{\frac{\log(2/\delta)}{n}}.
\end{aligned}
}
\label{eq:stat-main}
\end{equation}
The sampling terms contain no explicit dependence on the coordinate resolution $N$.
\end{theorem}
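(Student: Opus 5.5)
The argument is a standard excess-risk decomposition with the centered loss $\ell_u$. Let $f_N$ be the finite-resolution comparator of Lemma~\ref{lem:comparator} built from the target representation \eqref{eq:target-cost}; it lies in $\Cclass$ and satisfies $\norm{f_\star-f_N}_{L^2(\mu)}^2\le V^2\eta_N(\mu)^2$. By \eqref{eq:regression-identity},
\[
\norm{\widehat f_{m,N}-f_\star}_{L^2(\mu)}^2
= L(u_m)-L(f_\star)
= \bigl[L(u_m)-L_n(u_m)\bigr]+\bigl[L_n(u_m)-L_n(f_N)\bigr]+\bigl[L_n(f_N)-L(f_N)\bigr]+\bigl[L(f_N)-L(f_\star)\bigr].
\]
The last bracket equals $\norm{f_N-f_\star}^2\le V^2\eta_N(\mu)^2$. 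For the second bracket, note that $L_n$ and $R_n$ differ by a constant, so $L_n(u_m)-L_n(f_N)\le R_n(u_m)-R_n^*\le 16V^2/(m+3)$ by Proposition~\ref{prop:fw}, because $f_N\in\Cclass$. The first and third brackets are both bounded by $\sup_{u\in\Cclass}|L_n(u)-L(u)|$. This gives exactly the first two terms of \eqref{eq:stat-main}, plus a uniform deviation term.

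To control the uniform deviation, I would first apply McDiarmid. By \eqref{eq:prediction-envelope}, $|\ell_u(x,y)|\le V^2+2BV\le (B+V)^2$, so replacing one sample changes the supremum by at most $2(B+V)^2/n$. Hence with probability at least $1-\delta$ the supremum is at most its expectation plus $C(B+V)^2\sqrt{\log(2/\delta)/n}$. Measurability is handled by the separability remark in Section~\ref{sec:rademacher}. Symmetrization then bounds the expectation by $2\E\widehat\Rad_S(\{\ell_u:u\in\Cclass\})$, taken over the sample $(x_i,y_i)$.

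Next I split $\ell_u=u^2-2yu$. The map $t\mapsto t^2$ is $2V$-Lipschitz on $[-V,V]$ and vanishes at $0$, so contraction \eqref{eq:contraction} gives $\widehat\Rad(u^2)\le 2V\,\widehat\Rad_S(\Cclass)$. For the term $y_iu(x_i)$, with $|y_i|\le B$, I apply the coordinatewise contraction with $\varphi_i(t)=y_it$ to get $\widehat\Rad(yu)\le B\,\widehat\Rad_S(\Cclass)$; equivalently, $\varepsilon_iy_i/B$ can be handled by a Rademacher comparison. Since $\Cclass$ is contained in $V$ times the closed absolutely convex hull of $\Dsharp$ evaluated on the sample, and a linear supremum is unchanged by passing to the convex hull, $\widehat\Rad_S(\Cclass)\le V\,\widehat\Rad_S(\Dsharp)$. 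Lemma~\ref{lem:rad-main} then yields $C_1(B+V)V\,(K+1+\sqrt{\log(2+\log n)})/\sqrt n$, with no dependence on $N$. This also covers $n\ge2$; for $n=1$ the bound is trivial after adjusting constants.

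The main obstacle is the hull step. Elements of $\Cclass$ are integrals against measures, not finite combinations, so I have to justify $\sup_{u\in\Cclass}\sum_i\varepsilon_iu(x_i)\le V\sup_{\phi\in\Dsharp}\sum_i\varepsilon_i\phi(x_i)$. This holds pointwise in $\varepsilon$: write $\sum_i\varepsilon_iu(x_i)=\int\sum_i\varepsilon_i\psi_{\theta,N}(x_i)\,d\lambda(\theta)$ by Fubini on a finite sample, then bound it by $|\lambda|(\Pcal)\sup_\theta|\cdot|$, and use that $\Dsharp$ is symmetric. A second point to check is that the random-label term is handled by a valid contraction. If needed, I would bound it directly through Lemma~\ref{lem:rad-main}'s shell argument, using $|y_i|\le B$ inside the Hilbert Khintchine bound $\E\norm{\sum_i\varepsilon_iy_iZ_N(x_i)}\le BK\sqrt n$.
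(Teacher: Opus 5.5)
Your proposal is correct and follows essentially the same route as the paper: the comparator $f_N\in\Cclass$ from Lemma~\ref{lem:comparator}, Proposition~\ref{prop:fw} for the optimization gap, the point-mass bound $\widehat\Rad_S(\Cclass)\le V\widehat\Rad_S(\Dsharp)$ combined with Lemma~\ref{lem:rad-main}, and symmetrization, contraction and McDiarmid, all assembled through \eqref{eq:regression-identity}. The only difference is cosmetic: you contract the quadratic and label-linear parts of $\ell_u$ separately, whereas the paper treats $t\mapsto t^2-2yt$ as a single $2(B+V)$-Lipschitz map on $[-V,V]$, and both give the same $(B+V)V$ factor.
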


\begin{proof}
Let $f_N$ be the comparator from Lemma~\ref{lem:comparator}. Then
$f_N\in\Cclass$ and
$\norm{f_N-f_\star}_2\le V\eta_N(\mu)$. Proposition~\ref{prop:fw} gives
\begin{equation}
L_n(\widehat f_{m,N})
\le
L_n(f_N)+\frac{16V^2}{m+3}.
\label{eq:emp-comparison}
\end{equation}

For the statistical transfer, the integral definition of $\Cclass$ implies directly
that
\begin{equation}
\widehat\Rad_S(\Cclass)
\le
V\widehat\Rad_S(\Dsharp).
\label{eq:C-rad}
\end{equation}
Indeed, a linear functional is maximized over the total-variation ball by a signed
point mass. By Lemma~\ref{lem:rad-main}, the right-hand side is bounded by
\[
C_0V\frac{K+1+\sqrt{\log(2+\log n)}}{\sqrt n}.
\]

For $|y|\le B$ and $|u|\le V$, the map
$t\mapsto t^2-2yt$ vanishes at zero, is $2(B+V)$-Lipschitz on $[-V,V]$, and has
absolute value at most a universal multiple of $(B+V)^2$. Standard symmetrization,
contraction, and bounded-difference concentration
\cite{bartlettmendelson2002,ledouxtalagrand1991} therefore imply that, with
probability at least $1-\delta$,
\begin{equation}
\sup_{u\in\Cclass}|L(u)-L_n(u)|
\le
C_3(B+V)V
\frac{K+1+\sqrt{\log(2+\log n)}}{\sqrt n}
+
C_4(B+V)^2\sqrt{\frac{\log(2/\delta)}{n}}.
\label{eq:centered-uniform}
\end{equation}
Apply \eqref{eq:centered-uniform} to both $\widehat f_{m,N}$ and $f_N$ in
\eqref{eq:emp-comparison}, then use \eqref{eq:regression-identity} and
$\norm{f_N-f_\star}_2^2\le V^2\eta_N(\mu)^2$. Absorb numerical constants into
$C_1,C_2$.
\end{proof}

\begin{corollary}[Canonical quasi-Polish envelope]\label{cor:stat-quasi}
If $|h_j|\le1/j$, then with probability at least $1-\delta$,
\begin{align}
\norm{\widehat f_{m,N}-f_\star}_2^2
\le{}&
\frac{V^2}{N}
+\frac{16V^2}{m+3}
\nonumber\\
&+
C_1'(B+V)V
\frac{1+\sqrt{\log(2+\log n)}}{\sqrt n}
+
C_2(B+V)^2\sqrt{\frac{\log(2/\delta)}{n}}.
\label{eq:canonical-stat}
\end{align}
Thus the heuristic balance $N\asymp m\asymp\sqrt n$ matches the two deterministic
squared-error contributions to the global $n^{-1/2}$ sampling scale, up to logarithmic
factors.
\end{corollary}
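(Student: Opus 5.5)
This corollary follows by specializing Theorem~\ref{thm:stat-main} to the envelope $|h_j|\le 1/j$. I would work term by term in \eqref{eq:stat-main}, keeping the algorithm, the event of probability at least $1-\delta$, and the constant $C_2$ unchanged. Only the resolution term and the complexity term need to be rewritten.

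\emph{Resolution term.} By \eqref{eq:quasipolish-envelope},
\[
\eta_N(\mu)^2\le\sum_{j>N}j^{-2}\le\int_N^\infty t^{-2}\,dt=\frac1N,
\]
so $V^2\eta_N(\mu)^2\le V^2/N$. The width term $16V^2/(m+3)$ carries over verbatim from Proposition~\ref{prop:fw}.

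\emph{Complexity term.} I would use the radius bound \eqref{eq:K}, which gives $K\le\sqrt{1+\pi^2/6}=:K_0<2$. Then
\[
K+1+\sqrt{\log(2+\log n)}\le K_0+1+\sqrt{\log(2+\log n)}.
\]
Since $K_0+1\le 3\cdot 1$, this is at most $3\bigl(1+\sqrt{\log(2+\log n)}\bigr)$. Setting $C_1':=3C_1$ gives the third term of \eqref{eq:canonical-stat}. The confidence term is untouched.

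\emph{Closing heuristic.} For the final sentence, I would equate the deterministic terms with the sampling scale: $V^2/N\asymp V^2/m\asymp n^{-1/2}$ up to factors in $V$ and $B$, which gives $N\asymp m\asymp\sqrt n$. The logarithmic factors are inherited from the complexity term.

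There is no real obstacle here. The only point needing care is absorbing the constant $K_0$ into a universal constant without changing the displayed functional form.
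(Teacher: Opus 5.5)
Your proposal is correct and matches the paper's (implicit) argument: substitute $\eta_N(\mu)^2\le 1/N$ and $K\le\sqrt{1+\pi^2/6}<2$ from \eqref{eq:quasipolish-envelope} into \eqref{eq:stat-main}, keep the width term and $C_2$ unchanged, and absorb $K$ into $C_1'$. Your choice $C_1'=3C_1$ works because $K+1+\sqrt{\log(2+\log n)}\le 3\bigl(1+\sqrt{\log(2+\log n)}\bigr)$, and you rightly treat the balance $N\asymp m\asymp\sqrt n$ only as a heuristic reading of the resulting bound.
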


\section{Synthetic experiments}\label{sec:experiment}

This section reports two complementary numerical illustrations, both built on finite
candidate dictionaries in place of the continuous nonconvex oracle.
Section~\ref{sec:three-regime} probes the three mechanisms separated by the end-to-end
guarantee of Theorem~\ref{thm:stat-main} (coordinate resolution, greedy width, and
sample size) one at a time. Section~\ref{sec:threshold-experiment} then isolates the
weighted-versus-unweighted contrast quantified by Corollary~\ref{cor:threshold-linear}.
Neither experiment is intended to validate theorem
constants or to demonstrate that the continuous oracle is computationally tractable.

\subsection{Three-regime synthetic experiment: resolution, width, and sample size}
\label{sec:three-regime}

The theorem separates three mechanisms, but the continuous oracle in
\eqref{eq:weighted-oracle} is computationally nontrivial. We therefore use a deliberately
simple finite-candidate experiment whose purpose is only to illustrate the three
regimes.

\paragraph{Data and target.}
We truncate a synthetic input at a maximal ambient dimension $D=48$ and draw
independent $\xi_j\sim\mathrm{Unif}[-1,1]$, setting
\[
h_j(x)=\frac{\xi_j}{j}.
\]
Thus the canonical envelope $|h_j|\le1/j$ holds exactly. The noiseless target is a
finite combination of $20$ normalized atoms. The absolute sum of its normalized outer
coefficients is $7.36$, below the algorithmic budget $V=8$. The candidate dictionary
contains the $20$ target atoms plus $108$ independently generated distractor atoms.
Hence the experiment is favorable to the finite oracle: at full resolution the target
is contained in the candidate variation ball.

At each greedy step we maximize residual correlation over these $128$ candidates and
then approximately solve the fully-corrective $\ell^1$-constrained least-squares problem
by projected accelerated gradient. A held-out set of $6000$ points is used for all
reported test errors. Curves show means and standard deviations over three repetitions.

The complete generation and optimization code could be found in \url{https://doi.org/10.5281/zenodo.22026280}.

\paragraph{Resolution sweep.}
To isolate truncation, we use $4096$ noiseless training observations and $m=20$ greedy
steps while varying $N$. Figure~\ref{fig:resolution} shows a monotone decrease in test
error as more coordinates are retained. The dashed $N^{-1}$ line is only the
worst-case squared-error reference suggested by the canonical envelope; the theorem
does not predict equality with that slope for this distribution and target. The final
sharp drop at $N=48$ reflects the finite ambient construction and the fact that all
true atoms are present in the candidate pool.

\begin{figure}[htbp]
\centering
\includegraphics[width=0.72\linewidth]{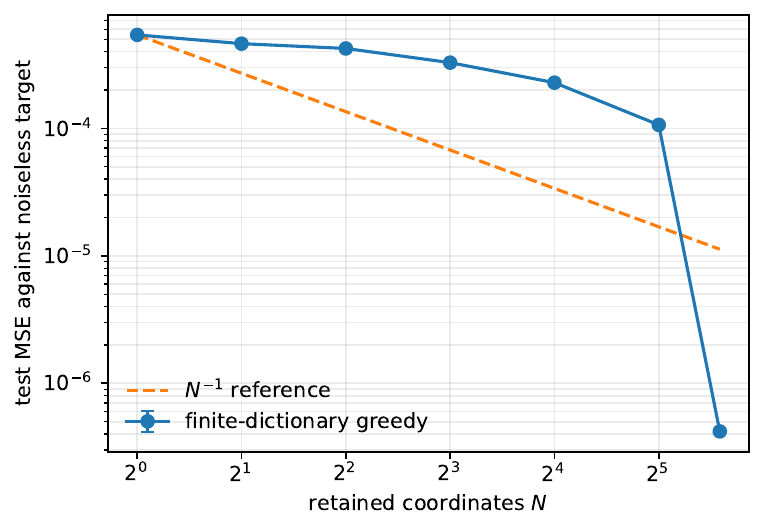}
\caption{Resolution sweep. Training labels are noiseless and $m=20$ is fixed. The
dashed line is an $N^{-1}$ reference, not a fitted law.}
\label{fig:resolution}
\end{figure}

\paragraph{Width sweep.}
With full resolution $N=48$ and $4096$ noiseless observations, increasing the number of
greedy atoms reduces the test error by more than three orders of magnitude, as shown in
Figure~\ref{fig:width}. Because the target itself is a $20$-atom combination included in
the pool, the error becomes nearly numerical at $m=20$. The dashed $m^{-1}$ line again
serves only as the theorem's generic squared-error reference.

\begin{figure}[htbp]
\centering
\includegraphics[width=0.72\linewidth]{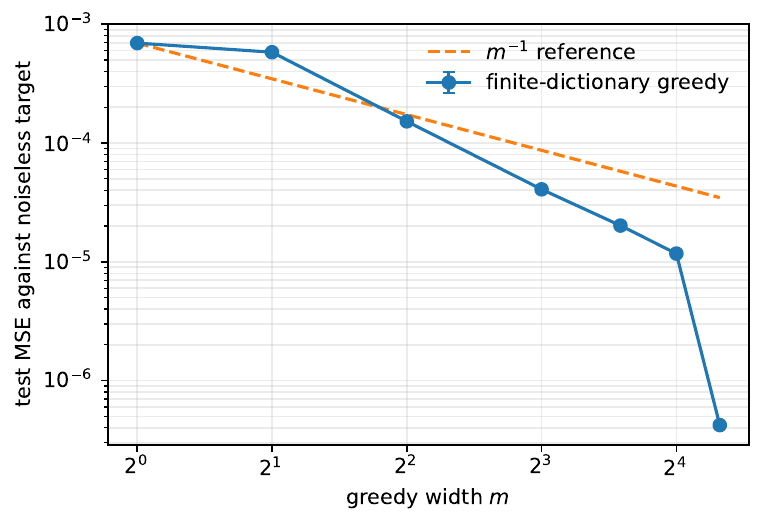}
\caption{Width sweep at full resolution. The finite candidate pool contains the target
atoms, so the last point is intentionally an easy endpoint.}
\label{fig:width}
\end{figure}

\paragraph{Sample-size sweep.}
Finally, fix $N=48$ and $m=20$, and add independent bounded noise with standard deviation
$0.08$ (uniform on $[-\sqrt3\,0.08,\sqrt3\,0.08]$). Figure~\ref{fig:sample} plots test
MSE against the noiseless regression function. Across the tested range the curve is
consistent with a roughly $n^{-1/2}$ global-complexity scale, although no asymptotic
claim is made from this small experiment.

\begin{figure}[htbp]
\centering
\includegraphics[width=0.72\linewidth]{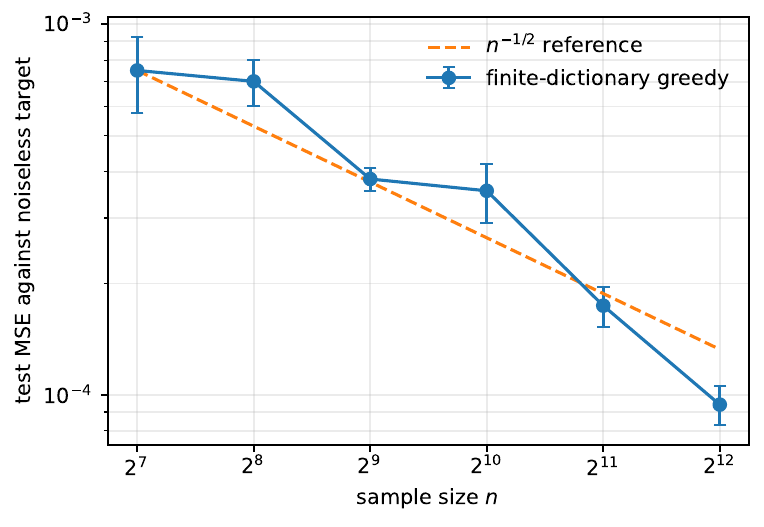}
\caption{Sample-size sweep with bounded observation noise. The dashed line is an
$n^{-1/2}$ reference.}
\label{fig:sample}
\end{figure}

The experiment should therefore be read as a sanity check for the decomposition rather
than evidence that the continuous oracle is tractable. A stronger computational paper
would need either a provable approximate neuron oracle or a realistic operator-learning
benchmark in which the cost of increasing $N$ is measured explicitly.

\subsection{Weighted versus unweighted variation: a numerical illustration}
\label{sec:threshold-experiment}

Corollary~\ref{cor:threshold-linear} proves in the present one-dimensional setting that
$\lambda\le\norm{f_\lambda}_{\Bsharp}\le1+\lambda$, whereas
$\norm{f_\lambda}_{\Bplain}\le1$. We complement this exact class separation with a
small numerical experiment that puts the two variation classes side by side under a
matched, finite representation budget.

\paragraph{Setup.} We work with a single coordinate $h_1(x)=x$, $x\in[-1,1]$, and
$c=0$, matching Remark~\ref{rem:threshold} exactly (so $p_+=\mu(h_1>0)=\tfrac12$ under
the uniform reference measure used for evaluation). For a grid of sharpness values
$\lambda\in\{1,2,4,\dots,256\}$ we fit the target $f_\lambda$ on $4000$ evenly spaced
test points using a finite candidate pool of $130$ atoms $g_{(w,b)}$, built from a
grid of $10$ slopes $w\in\{0.5,1,2,\dots,256\}$ and $13$ thresholds
$b\in[-0.6,0.6]$ -- deliberately richer than the single canonical atom, so that the
weighted fit is free to combine several affordable atoms if that would let it track a
sharp target more cheaply than the canonical representation. Each target is fit twice,
by the same fully-corrective projected-gradient scheme used in
Section~\ref{sec:three-regime} with budget $V=8$: once over the raw atoms $g_{(w,b)}$
with the classical unweighted budget $\sum_i|a_i|\le V$, and once over the normalized
atoms $\psi_{(w,b)}=g_{(w,b)}/(1+\sqrt{w^2+b^2})$ with the weighted budget
$\sum_i|a_i|\le V$ used throughout Sections~\ref{sec:empirical}--\ref{sec:statistical}.
The complete code is provided with the manuscript

\begin{figure}[htbp]
\centering
\includegraphics[width=0.95\linewidth]{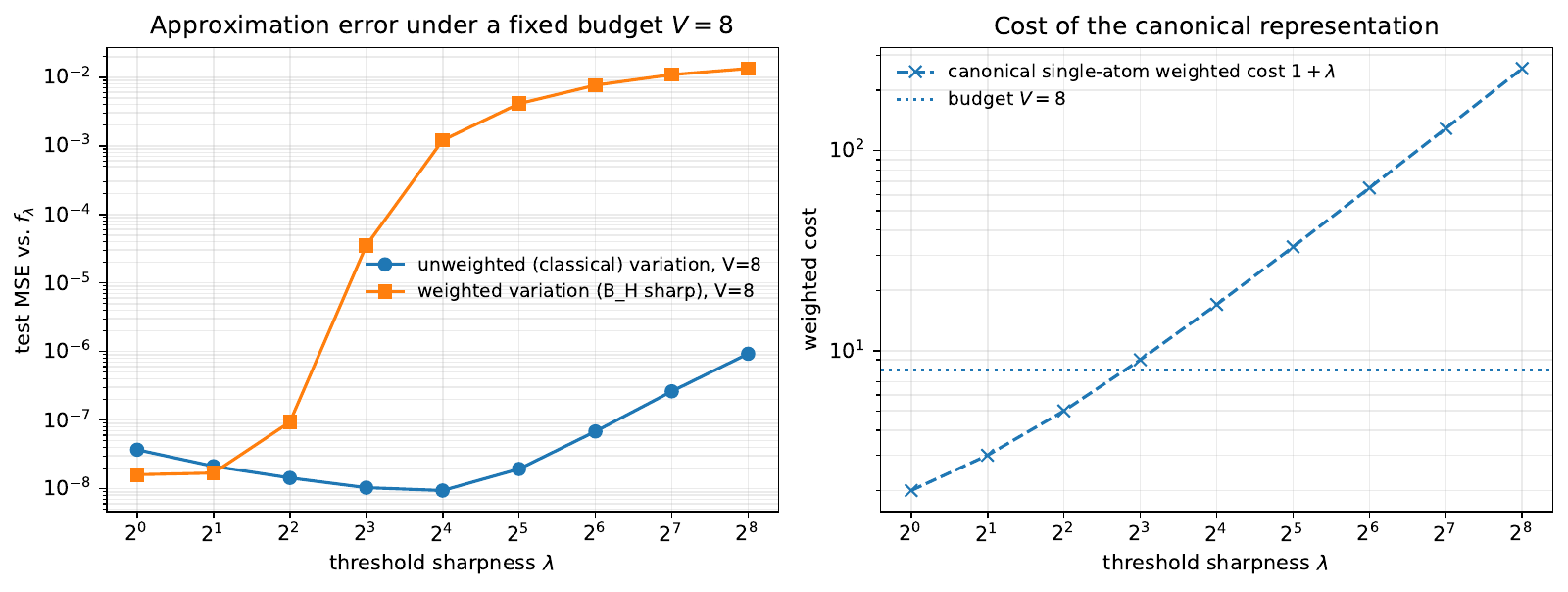}
\caption{Left: test MSE against $f_\lambda$ under a matched budget $V=8$, for the
unweighted (classical) and weighted ($\Bsharp$) variation classes. Right: the
canonical single-atom weighted cost $1+\lambda$, which is an upper bound on
$\norm{f_\lambda}_{\Bsharp}$ and differs by one from the new lower bound $\lambda$ in
Corollary~\ref{cor:threshold-linear}; the canonical cost crosses $V=8$ at $\lambda=7$.}
\label{fig:threshold}
\end{figure}

\paragraph{Result and relation to Corollary~\ref{cor:threshold-linear}.}
Figure~\ref{fig:threshold} shows a clean separation between the two fixed-radius
classes. Corollary~\ref{cor:threshold-linear} now gives a rigorous explanation in this
one-dimensional setting:
\[
 \lambda\le \norm{f_\lambda}_{\Bsharp}\le1+\lambda,
 \qquad
 \norm{f_\lambda}_{\Bplain}\le1.
\]
Thus a fixed weighted budget $V=8$ cannot contain $f_\lambda$ once $\lambda>8$, while
the corresponding unweighted ball already contains the exact target for every
$\lambda$ through its single raw atom. The numerical curves are consistent with this
sharp theoretical distinction: the unweighted fit remains near numerical accuracy,
whereas the weighted fit deteriorates rapidly as $\lambda$ passes the fixed budget.
The right panel displays the canonical upper bound $1+\lambda$; the new lower bound
$\lambda$ differs from it by only one unit, so the weighted norm is pinned down to a
unit-width interval for this family.

\paragraph{Computational caveat.}
The experiment still uses a finite candidate pool and therefore does not certify that
the projected-gradient solver finds the best continuum representation at a given
budget. This limitation concerns the numerical optimizer, not the class separation:
Corollary~\ref{cor:threshold-linear} proves independently of the candidate pool that
$f_\lambda$ lies outside the radius-$V$ weighted ball whenever $\lambda>V$.

\section{Hilbert-valued targets}\label{sec:hilbert}

Let $\mathcal Y$ be a separable real Hilbert space (so $\mathcal Y\cong\R^d$ or
$\mathcal Y\cong\ell^2$) with orthonormal basis $(s_k)_{k\ge1}$, and let
$F:\X\to\mathcal Y$. We extend the scalar atoms by an output direction,
\begin{equation}
\Psi_{\theta,v}(x):=\psi_\theta(x)v,
\qquad
\theta\in\Pcal,\quad v\in\mathcal Y,\ \norm v_{\mathcal Y}\le1,
\label{eq:vector-atom}
\end{equation}
so that $\norm{\Psi_{\theta,v}(x)}_{\mathcal Y}\le1$ pointwise, exactly as in
\eqref{eq:atom-envelope}.

\begin{lemma}[Linear oracle for Hilbert-valued targets]\label{lem:vector-oracle}
For every $R\in L^2(\mu;\mathcal Y)$ and every $\theta\in\Pcal$,
\begin{equation}
\sup_{\norm v_{\mathcal Y}\le1}
\bigl|\ip R{\psi_\theta v}_{L^2(\mu;\mathcal Y)}\bigr|
=
\norm{\int_\X\psi_\theta(x)R(x)\,\mu(dx)}_{\mathcal Y},
\label{eq:vector-oracle}
\end{equation}
and, when the vector on the right is nonzero, the supremum on the left is attained at
its normalization.
\end{lemma}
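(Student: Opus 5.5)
The plan is to move the inner product in $L^2(\mu;\mathcal Y)$ inside the integral. This turns the left-hand side of \eqref{eq:vector-oracle} into the supremum of a continuous linear functional on the closed unit ball of $\mathcal Y$. The Riesz/Cauchy--Schwarz characterization of the dual norm in a Hilbert space then gives both the identity and the location of the maximizer.

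\textbf{Step 1: the vector $m_\theta$ is well defined.} Set
\[
m_\theta:=\int_\X\psi_\theta(x)R(x)\,\mu(dx)\in\mathcal Y,
\]
understood as a Bochner integral.
\begin{itemize}
\item Measurability: $x\mapsto\psi_\theta(x)$ is measurable and bounded by one, by \eqref{eq:atom-envelope}. Hence $\psi_\theta R$ is strongly measurable; separability of $\mathcal Y$ means weak and strong measurability coincide.
\item Integrability: since $\mu$ is a probability measure, Cauchy--Schwarz gives
\[
\int\norm{\psi_\theta R}_{\mathcal Y}\,d\mu\le\norm R_{L^2(\mu;\mathcal Y)}<\infty .
\]
\end{itemize}
This is the only point requiring care, and it is routine.

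\textbf{Step 2: rewrite the pairing as a linear functional of $v$.} For fixed $v$ with $\norm v_{\mathcal Y}\le1$,
\[
\ip R{\psi_\theta v}_{L^2(\mu;\mathcal Y)}=\int_\X\psi_\theta(x)\ip{R(x)}{v}_{\mathcal Y}\,\mu(dx)=\ip{m_\theta}{v}_{\mathcal Y}.
\]
The last equality is the standard fact that bounded linear functionals commute with Bochner integrals, applied to $y\mapsto\ip yv_{\mathcal Y}$. Alternatively, one can expand in the orthonormal basis $(s_k)$ and use dominated convergence.

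\textbf{Step 3: evaluate the supremum.}
\begin{itemize}
\item Upper bound: Cauchy--Schwarz gives $|\ip{m_\theta}v|\le\norm{m_\theta}$ for every $\norm v\le1$, so the supremum is at most $\norm{m_\theta}$.
\item Attainment when $m_\theta\neq0$: take $v^*=m_\theta/\norm{m_\theta}$. Then $\ip{m_\theta}{v^*}=\norm{m_\theta}$, so the supremum is attained at the normalization of $m_\theta$.
\item The case $m_\theta=0$: both sides vanish, and the equality holds trivially.
\end{itemize}

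No step is a genuine obstacle; the only subtlety is the justification in Step 1 and the interchange in Step 2.
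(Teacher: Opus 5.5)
Your proposal is correct and follows the paper's proof essentially step for step. Like the paper, you define $\int_\X\psi_\theta R\,d\mu$ (well defined since $|\psi_\theta|\le1$ and $L^2\subset L^1$ on a probability space), move the pairing with $v$ inside the integral, and conclude with the Hilbert-space dual-norm identity, with attainment at the normalized vector; your Bochner-integral justification of the interchange is just a more explicit version of the paper's appeal to Fubini for the scalar pairing.
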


\begin{proof}
Write $\zeta(\theta):=\int_\X\psi_\theta(x)R(x)\,\mu(dx)\in\mathcal Y$ (well defined
since $\norm{\psi_\theta}_{L^\infty}\le1$ and $R\in L^2(\mu;\mathcal Y)\subset
L^1(\mu;\mathcal Y)$ on the probability space $(\X,\mu)$). By Fubini for the scalar
pairing,
\[
\ip R{\psi_\theta v}_{L^2(\mu;\mathcal Y)}
=
\int_\X\psi_\theta(x)\ip{R(x)}{v}_{\mathcal Y}\,\mu(dx)
=
\ip{\zeta(\theta)}{v}_{\mathcal Y}.
\]
Hence the left side of \eqref{eq:vector-oracle} equals
$\sup_{\norm v\le1}|\ip{\zeta(\theta)}v|=\norm{\zeta(\theta)}_{\mathcal Y}$ by the
Riesz representation of the norm on a Hilbert space, attained at
$v=\zeta(\theta)/\norm{\zeta(\theta)}$ whenever $\zeta(\theta)\ne0$.
\end{proof}

Lemma~\ref{lem:vector-oracle} shows that the output direction of the linear
minimization oracle \eqref{eq:weighted-oracle} can be eliminated analytically -- the
greedy step still reduces to a search over the scalar parameter $\theta$ alone, with
the optimal $v$ read off in closed form -- and that the coordinate-truncation argument
of Section~\ref{sec:truncation} is unchanged, since it acts only on the scalar input
atom $\psi_\theta$, not on $v$. This is a genuine computational simplification, but it
says nothing yet about statistical complexity, which is where the vector-valued case
departs from the scalar one.

\begin{definition}[Weighted vector variation class]\label{def:vector-variation}
For $F\in L^2(\mu;\mathcal Y)$, let $\Mrep_{\mathcal Y}(F)$ be the collection of pairs
$(\nu,w)$ where $\nu$ is a finite nonnegative Borel measure on $\Pcal$ and
$w:\Pcal\to\mathcal Y$ is $\nu$-measurable with $\norm{w(\theta)}_{\mathcal Y}\le1$
for $\nu$-a.e.\ $\theta$, such that
\[
F(x)=\int_\Pcal\psi_\theta(x)\,w(\theta)\,\nu(d\theta)
\qquad\text{for }\mu\text{-a.e.\ }x.
\]
Define
\begin{equation}
\norm F_{\Bsharp(\mathcal Y)}
:=
\inf_{(\nu,w)\in\Mrep_{\mathcal Y}(F)}\nu(\Pcal).
\label{eq:vector-variation}
\end{equation}
\end{definition}

This mirrors the $\lambda$-representation \eqref{eq:normalized-rep} of
Definition~\ref{def:variation}, not the raw $\nu$-representation \eqref{eq:rep}: since
$\psi_\theta$ already carries the weight $1/(1+\norm\theta_{\Pcal})$, the cost of a
representation is simply the total mass $\nu(\Pcal)$, with \emph{no} further
reweighting by $(1+\norm\theta_{\Pcal})$ -- reintroducing that factor here would
double-count it. For $\mathcal Y=\R$ (so $w:\Pcal\to[-1,1]$), we check
$\norm\cdot_{\Bsharp(\R)}=\norm\cdot_{\Bsharp}$ by the two inequalities that make up
equality of the two infima.

First, $\norm f_{\Bsharp}\le\norm f_{\Bsharp(\R)}$: given any $(\nu,w)\in\Mrep_{\R}(f)$,
set $d\lambda:=w\,d\nu$, a signed measure with $f=\int_\Pcal\psi_\theta\,d\lambda$ (a
valid $\lambda$-representation in the sense of \eqref{eq:normalized-rep}) and, since
$|w(\theta)|\le1$ pointwise and $\nu\ge0$, $|\lambda|=|w|\,\nu\le\nu$ as measures --
so $|\lambda|(\Pcal)\le\nu(\Pcal)$: no mass is discarded, it is absorbed into
$\lambda$ and reweighted by $|w|$ instead. Hence
$\norm f_{\Bsharp}\le|\lambda|(\Pcal)\le\nu(\Pcal)$ for every
$(\nu,w)\in\Mrep_{\R}(f)$; taking the infimum over $(\nu,w)$ gives $\norm
f_{\Bsharp}\le\norm f_{\Bsharp(\R)}$.

Second, $\norm f_{\Bsharp(\R)}\le\norm f_{\Bsharp}$: given any $\lambda$-representation
$f=\int_\Pcal\psi_\theta\,d\lambda$, set $\nu:=|\lambda|$ and
$w:=\operatorname{sgn}(\lambda)$ (defined $|\lambda|$-a.e.\ and extended arbitrarily,
say $w\equiv1$, on the $|\lambda|$-null remainder, which does not affect $\nu$ or the
integral). Then $(\nu,w)\in\Mrep_{\R}(f)$, since $\nu\ge0$ is finite, $|w|\le1$
$\nu$-a.e., and $\int_\Pcal\psi_\theta w\,d\nu=\int_\Pcal\psi_\theta\,d\lambda=f$, at
the same cost $\nu(\Pcal)=|\lambda|(\Pcal)$. Hence $\norm
f_{\Bsharp(\R)}\le\nu(\Pcal)=|\lambda|(\Pcal)$ for every $\lambda$-representation;
taking the infimum over $\lambda$ gives $\norm f_{\Bsharp(\R)}\le\norm f_{\Bsharp}$.

Together, $\norm\cdot_{\Bsharp(\mathcal Y)}$ reduces exactly to $\norm\cdot_{\Bsharp}$
when $\mathcal Y=\R$.

For the statistical result it is useful to define the vector-valued feasible class
\begin{equation}
\Cclass^{\mathcal Y}
:=
\left\{
 F(x)=\int_{\Pcal}\psi_{\theta,N}(x)v(\theta)\,\nu(d\theta):
 \nu\ge0,\ \nu(\Pcal)\le V,\ \norm{v(\theta)}_{\mathcal Y}\le1
\right\}.
\label{eq:vector-Cclass}
\end{equation}
Every $F\in\Cclass^{\mathcal Y}$ satisfies $\norm{F(x)}_{\mathcal Y}\le V$.

\begin{lemma}[Dimension-uniform vector loss complexity]\label{lem:vector-loss}
Let $S=((x_i,\mathbf y_i))_{i=1}^n$ satisfy $\norm{\mathbf y_i}_{\mathcal Y}\le B$.
For the centered vector squared loss
\[
 \ell_F(x,\mathbf y):=\norm{F(x)}_{\mathcal Y}^2-2\ip{\mathbf y}{F(x)}_{\mathcal Y},
\]
there is a universal constant $C>0$ such that
\begin{equation}
 \widehat\Rad_S\bigl(\ell\circ\Cclass^{\mathcal Y}\bigr)
 \le
 C\,(V^2+BV)\,
 \frac{K+1+\sqrt{\log(2+\log n)}}{\sqrt n}.
 \label{eq:vector-loss-rad}
\end{equation}
The bound is independent of both $N$ and $\dim(\mathcal Y)$.
\end{lemma}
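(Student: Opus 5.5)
We split the centered loss into its quadratic and linear parts,
\[
\ell_F(x,\mathbf y)=\norm{F(x)}_{\mathcal Y}^2-2\ip{\mathbf y}{F(x)}_{\mathcal Y},
\]
and bound the two resulting classes separately. Subadditivity of $\widehat\Rad_S$ gives
\[
\widehat\Rad_S(\ell\circ\Cclass^{\mathcal Y})\le\widehat\Rad_S\bigl(\{x\mapsto\norm{F(x)}^2\}\bigr)+2\,\widehat\Rad_S\bigl(\{(x,\mathbf y)\mapsto\ip{\mathbf y}{F(x)}\}\bigr).
\]
A vector-valued contraction (Maurer's inequality) would introduce an output sum whose rank equals $\dim\mathcal Y$. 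To avoid this, we exploit the rank-one structure $\psi_{\theta,N}(x)v$ of the atoms, so that every supremum over $v$ can be taken analytically, exactly as in Lemma~\ref{lem:vector-oracle}.

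\emph{Linear part.} We write $\ip{\mathbf y_i}{F(x_i)}=\int \psi_{\theta,N}(x_i)\ip{\mathbf y_i}{v(\theta)}\,\nu(d\theta)$. A linear functional of $F$ is maximized over $\Cclass^{\mathcal Y}$ at a point mass, so
\[
\sup_F\sum_i\varepsilon_i\ip{\mathbf y_i}{F(x_i)}=V\sup_\theta\Bigl\|\sum_i\varepsilon_i\psi_{\theta,N}(x_i)\mathbf y_i\Bigr\|_{\mathcal Y}.
\]
We then repeat the proof of Lemma~\ref{lem:rad-main} for the class $\theta\mapsto\sum_i\varepsilon_i\psi_{\theta,N}(x_i)\mathbf y_i$, following these steps.
\begin{enumerate}[label=(\arabic*)]
\item Decompose into the same dyadic shells $\Theta_j$.
\item On each shell, apply a Hilbert-valued contraction for the $1$-Lipschitz scalar map $\rho$, after the reduction below.
\item Use the finite union Lemma~\ref{lem:finite-union} for the head shells.
\item Use the envelope bound $2^{-J}B$ for the tail shells.
\end{enumerate}
The reduction for step (2) is to write the norm as $\sup_{\|u\|\le1}\sum_i\varepsilon_i\psi_{\theta,N}(x_i)a_i(u)$ with scalars $a_i(u)=\ip{\mathbf y_i}{u}$, $|a_i|\le B$. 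The contraction inequality \eqref{eq:contraction} with weights $a_i$ (Ledoux--Talagrand permits $\varphi_i=a_i\rho$, each $B$-Lipschitz with $\varphi_i(0)=0$) then reduces the problem to $\E\sup_{\theta,u}\sum_i\varepsilon_i\ip{\theta}{Z_N(x_i)}\ip{\mathbf y_i}{u}$. That supremum is an operator norm of $\sum_i\varepsilon_i Z_N(x_i)\otimes\mathbf y_i$, which we bound by the Hilbert--Schmidt norm: Khintchine in the Hilbert space $\Pcal\otimes\mathcal Y$ gives at most $2^jKB\sqrt n$, with no dependence on $\dim\mathcal Y$. The expected result is a term $O\bigl(BV(K+1+\sqrt{\log(2+\log n)})/\sqrt n\bigr)$.

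\emph{Quadratic part.} We write $\norm{F(x)}^2=\iint\psi_{\theta,N}(x)\psi_{\theta',N}(x)\ip{v(\theta)}{v(\theta')}\,\nu(d\theta)\nu(d\theta')$. The cleanest route is to linearize: $\norm{F(x)}^2=\sup_{\|G(x)\|\le V}\bigl(2\ip{G(x)}{F(x)}-\norm{G(x)}^2\bigr)$ is unhelpful, so instead we use $t\mapsto t^2$, which is $2V$-Lipschitz on $[0,V]$, applied to the scalar class $\norm{F(\cdot)}_{\mathcal Y}$. The main obstacle is that $x\mapsto\norm{F(x)}$ is itself not of the atomic form. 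We would handle it by the same Hilbert--Schmidt device: $\norm{F(x_i)}^2=\ip{F(x_i)}{F(x_i)}$ is bilinear, and the supremum over the product of two copies of the (convex) class is attained at pairs of point masses. This reduces the problem to bounding $\E\sup_{\theta,\theta',v,v'}\sum_i\varepsilon_i\psi_{\theta,N}(x_i)\psi_{\theta',N}(x_i)\ip{v}{v'}$, and since $|\ip v{v'}|\le1$ is constant in $i$, it reduces further to the scalar product class $\{\psi_\theta\psi_{\theta'}\}$. For products of $[-1,1]$-valued functions, $ab=\tfrac14((a+b)^2-(a-b)^2)$ with $2$-Lipschitz squaring on $[-2,2]$ gives complexity $\le 4\bigl(\widehat\Rad_S(\Dsharp)+\widehat\Rad_S(\Dsharp)\bigr)$ up to constants, multiplied by $V^2$ from the two total masses. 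Lemma~\ref{lem:rad-main} then yields $O\bigl(V^2(K+1+\sqrt{\log(2+\log n)})/\sqrt n\bigr)$.

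The step we are least sure of is the claim that the supremum of the quadratic term is attained at point masses. The quadratic term is convex, not linear, in $\nu$, so the supremum of a Rademacher sum of a convex functional over a convex set is not automatically reached at extreme points. If this fails, we fall back on the pointwise bound $\sum_i\varepsilon_i\norm{F(x_i)}^2=\iint\bigl[\sum_i\varepsilon_i\psi_{\theta,N}(x_i)\psi_{\theta',N}(x_i)\bigr]\ip{v(\theta)}{v(\theta')}\,d\nu\,d\nu$. Taking absolute values inside, this is at most $V^2\sup_{\theta,\theta'}\bigl|\sum_i\varepsilon_i\psi_{\theta,N}(x_i)\psi_{\theta',N}(x_i)\bigr|$. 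That bound needs no extreme-point argument, so this inequality is the route we would actually commit to. Summing the two parts and absorbing constants gives \eqref{eq:vector-loss-rad}.
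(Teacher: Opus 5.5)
\textbf{Verdict: the linear part has a genuine gap at step (2); the quadratic part is fine.}

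Your quadratic part is the paper's argument. You use the pointwise double-integral bound and then a product-class inequality. Your polarization identity $ab=\tfrac14((a+b)^2-(a-b)^2)$ is a valid elementary proof of that inequality. Squaring is $4$-Lipschitz on $[-2,2]$, not $2$-Lipschitz, but that only changes constants.

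The problem is step (2) of the linear part. The contraction principle \eqref{eq:contraction} requires maps $\varphi_i$ that are the same for every element of the index set. For a single fixed $u$ the choice $\varphi_i=a_i\rho$ would be legitimate. Your supremum, however, runs over $(\theta,u)$ jointly, and the weights $a_i(u)=\ip{\mathbf y_i}{u}$ vary with $u$. So the comparison
\[
\E_\varepsilon\sup_{\theta,u}\sum_i\varepsilon_i\rho\bigl(\ip{\theta}{Z_N(x_i)}\bigr)a_i(u)
\lesssim
\E_\varepsilon\sup_{\theta,u}\sum_i\varepsilon_i\ip{\theta}{Z_N(x_i)}a_i(u)
\]
does not follow. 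The left process is not even a coordinatewise function of the right one. For $(t,a)=(1,1)$ and $(t,a)=(2,\tfrac12)$ the products $ta$ coincide, while $\rho(t)a$ equals $\tfrac12$ and $\tfrac13$. Hence no fixed $\varphi_i$ with $\varphi_i(t_ia_i)=\rho(t_i)a_i$ exists. The same example shows the increments are not dominated, so a Gaussian (Sudakov--Fernique) comparison cannot be invoked directly either. Your Hilbert--Schmidt estimate for $\sum_i\varepsilon_iZ_N(x_i)\otimes\mathbf y_i$ is correct in itself, but you have not shown that it dominates the quantity you need.

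The repair is the device you already use for the quadratic part, and it is exactly the paper's route:
\begin{enumerate}[label=(\roman*)]
\item Bound the linear part by $V\,\E_\varepsilon\sup_{\theta,\norm{v}\le1}\bigl|n^{-1}\sum_i\varepsilon_i\psi_{\theta,N}(x_i)\ip{\mathbf y_i}{v}\bigr|$.
\item Regard the summand as a product of $\Dsharp$ with the $[-1,1]$-valued class $\mathcal G=\{i\mapsto\ip{\mathbf y_i}{v}/B:\norm{v}\le1\}$, and apply the product-class inequality.
\item Use $\widehat\Rad_S(\mathcal G)=(nB)^{-1}\E_\varepsilon\norm{\sum_i\varepsilon_i\mathbf y_i}_{\mathcal Y}\le n^{-1/2}$ together with Lemma~\ref{lem:rad-main}.
\end{enumerate}
This gives $CBV\bigl(K+1+\sqrt{\log(2+\log n)}\bigr)/\sqrt n$ without rerunning the dyadic shell argument.

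Your worry about Maurer's inequality is misplaced. It is applied to the scalar pair $\bigl(\psi_{\theta,N}(x_i),\ip{\mathbf y_i}{v}/B\bigr)\in[-1,1]^2$, not to a $\mathcal Y$-valued map, so $\dim\mathcal Y$ never enters. The output space appears only through the dimension-free Khintchine bound for $\mathcal G$.
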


\begin{proof}
Write $A_n:=K+1+\sqrt{\log(2+\log n)}$. For the linear part, by the integral
representation and duality in $\mathcal Y$,
\[
\begin{aligned}
&\E_\varepsilon\sup_{F\in\Cclass^{\mathcal Y}}
 \left|\frac1n\sum_i\varepsilon_i\ip{\mathbf y_i}{F(x_i)}\right|\\
&\qquad\le
V\,\E_\varepsilon\sup_{\theta,\norm v\le1}
 \left|\frac1n\sum_i\varepsilon_i\psi_{\theta,N}(x_i)\ip{\mathbf y_i}{v}\right|.
\end{aligned}
\]
The last display is the Rademacher complexity of the product of two classes bounded by
one after dividing the second factor by $B$. A standard bounded product-class contraction bound (obtainable, for example, from
Maurer's vector-contraction inequality applied to $(a,b)\mapsto ab$ on $[-1,1]^2$
\cite{maurer2016}) gives
\[
 \widehat\Rad_S(\mathcal F\mathcal G)
 \le C\bigl(\widehat\Rad_S(\mathcal F)+\widehat\Rad_S(\mathcal G)\bigr)
\]
for $[-1,1]$-valued classes. Here
$\mathcal F=\Dsharp$ and
$\mathcal G=\{i\mapsto\ip{\mathbf y_i}{v}/B:\norm v\le1\}$, while
\[
 \widehat\Rad_S(\mathcal G)
 =\frac1{nB}\E_\varepsilon\norm{\sum_i\varepsilon_i\mathbf y_i}_{\mathcal Y}
 \le\frac1{\sqrt n}.
\]
Lemma~\ref{lem:rad-main} therefore bounds the linear part by
$CBV A_n/\sqrt n$.

For the quadratic part, every $F\in\Cclass^{\mathcal Y}$ has a representation with
mass at most $V$, and
\[
 \norm{F(x_i)}^2
 =\iint \psi_{\theta,N}(x_i)\psi_{\theta',N}(x_i)
 \ip{v(\theta)}{v(\theta')}\,\nu(d\theta)\nu(d\theta').
\]
Hence
\[
\E_\varepsilon\sup_F\left|\frac1n\sum_i\varepsilon_i\norm{F(x_i)}^2\right|
\le
V^2\,
\E_\varepsilon\sup_{\theta,\theta'}
\left|\frac1n\sum_i\varepsilon_i
\psi_{\theta,N}(x_i)\psi_{\theta',N}(x_i)\right|.
\]
A second application of the bounded product-class inequality and
Lemma~\ref{lem:rad-main} gives $CV^2A_n/\sqrt n$. Combining the two parts proves
\eqref{eq:vector-loss-rad}. No coordinate basis of $\mathcal Y$ is used.
\end{proof}

\begin{theorem}[Dimension-uniform Hilbert-valued greedy learning]\label{thm:vector-stat}
Let $(X,\mathbf Y)$ be distributed on $\X\times\mathcal Y$ with
$\norm{\mathbf Y}_{\mathcal Y}\le B$ almost surely, and let
$F_\star(x)=\E[\mathbf Y\mid X=x]$. Suppose
$\norm{F_\star}_{\Bsharp(\mathcal Y)}\le V$.
Run the fully-corrective vector-valued greedy method over
$\Cclass^{\mathcal Y}$, using the exact linear oracle from
Lemma~\ref{lem:vector-oracle}, and denote its $m$th iterate by
$\widehat F_{m,N}$. Then universal constants $C_1,C_2>0$ exist such that, with
probability at least $1-\delta$,
\begin{equation}
\begin{aligned}
\norm{\widehat F_{m,N}-F_\star}_{L^2(\mu;\mathcal Y)}^2
\le{}&
V^2\eta_N(\mu)^2+\frac{16V^2}{m+3}\\
&+C_1(V^2+BV)
\frac{K+1+\sqrt{\log(2+\log n)}}{\sqrt n}
+C_2(B+V)^2\sqrt{\frac{\log(2/\delta)}n}.
\end{aligned}
\label{eq:vector-stat-uniform}
\end{equation}
In particular the statistical terms contain no explicit dependence on the retained
input resolution $N$ or on the Hilbert dimension of $\mathcal Y$.
\end{theorem}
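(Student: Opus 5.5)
The plan is to mirror the proof of Theorem~\ref{thm:stat-main} step by step, replacing each scalar ingredient with its Hilbert-valued counterpart.

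\textbf{Comparator.} Start from a representation $(\nu,w)\in\Mrep_{\mathcal Y}(F_\star)$ with $\nu(\Pcal)\le V$. If the infimum in \eqref{eq:vector-variation} is not attained, use $V+\epsilon$ and let $\epsilon\downarrow0$ at the end. Define
\[
F_N(x):=\int\psi_{\theta,N}(x)w(\theta)\,\nu(d\theta).
\]
Then $F_N\in\Cclass^{\mathcal Y}$. Applying Lemma~\ref{lem:atom-trunc} inside the Bochner integral and using the triangle inequality in $\mathcal Y$ gives the pointwise bound
\[
\norm{F_\star(x)-F_N(x)}_{\mathcal Y}\le V\norm{(I-P_N)H(x)}.
\]
Hence $\norm{F_\star-F_N}_{L^2(\mu;\mathcal Y)}\le V\eta_N(\mu)$, exactly as in Lemma~\ref{lem:comparator}.

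\textbf{Optimization.} Equip functions with the empirical norm $\norm{F}_n^2=n^{-1}\sum_i\norm{F(X_i)}_{\mathcal Y}^2$. The proof of Proposition~\ref{prop:fw} used only four facts, and all carry over to this Hilbert-space norm:
\begin{itemize}
\item convexity of the quadratic empirical risk;
\item radius $V$ of the feasible set, since $\norm{F(x)}\le V$ on $\Cclass^{\mathcal Y}$;
\item existence of a linear minimizer over the extreme points $\pm V\psi_{\theta,N}v$;
\item the fact that the fully-corrective step is at least as good as the segment step.
\end{itemize}
By Lemma~\ref{lem:vector-oracle}, applied with $\mu$ replaced by the empirical measure, the linear minimizer reduces to a maximization over $\theta$ with $v$ given in closed form. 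Existence of a maximizer follows from the same coercivity argument as for \eqref{eq:weighted-oracle}. The recursion \eqref{eq:fw-recursion} then holds verbatim and yields
\[
L_n(\widehat F_{m,N})\le L_n(F_N)+\frac{16V^2}{m+3},
\]
where $L_n$ is the empirical centered vector loss $\ell_F$ of Lemma~\ref{lem:vector-loss}.

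\textbf{Statistical transfer.} Conditional expectation gives the identity
\[
L(F)-L(F_\star)=\norm{F-F_\star}^2_{L^2(\mu;\mathcal Y)},
\qquad L(F)=\E\,\ell_F(X,\mathbf Y).
\]
For $F\in\Cclass^{\mathcal Y}$ we have $|\ell_F|\le V^2+2BV\le 2(B+V)^2$, so each sample changes $\sup_F|L_n(F)-L(F)|$ by at most $4(B+V)^2/n$. McDiarmid's inequality, together with the separability measurability remark of Section~\ref{sec:rademacher}, controls the deviation around the expectation. Symmetrization bounds the expectation by $2\E\widehat\Rad_S(\ell\circ\Cclass^{\mathcal Y})$, which Lemma~\ref{lem:vector-loss} bounds by $C(V^2+BV)A_n/\sqrt n$ uniformly in $S$. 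Together these give
\[
\sup_{F\in\Cclass^{\mathcal Y}}|L_n(F)-L(F)|\le C_1(V^2+BV)\frac{A_n}{\sqrt n}+C_2(B+V)^2\sqrt{\frac{\log(2/\delta)}{n}}.
\]
It remains to chain the estimates:
\[
\norm{\widehat F-F_\star}^2=L(\widehat F)-L(F_\star)\le L_n(\widehat F)-L(F_\star)+\text{dev}\le L_n(F_N)-L(F_\star)+\frac{16V^2}{m+3}+\text{dev}\le L(F_N)-L(F_\star)+\frac{16V^2}{m+3}+2\,\text{dev},
\]
and $L(F_N)-L(F_\star)=\norm{F_N-F_\star}^2\le V^2\eta_N(\mu)^2$. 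Constants are absorbed into $C_1$ and $C_2$.

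\textbf{Main obstacle.} I expect the only delicate point to be the bounded-difference and Bochner-measurability bookkeeping in infinite-dimensional $\mathcal Y$. This covers measurability of $\theta\mapsto w(\theta)$ for the comparator, and separability of $\Cclass^{\mathcal Y}$ for the countable-supremum argument. Separability follows because $\mathcal Y$ is separable, so $\Cclass^{\mathcal Y}$ is again the image of a separable metric space of finite $(a,\theta,v)$-tuples under a continuous map. Everything else is a formal substitution of the scalar arguments.
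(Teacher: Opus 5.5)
Your proposal is correct and follows essentially the same route as the paper's proof. Both use the truncated-input comparator in $\Cclass^{\mathcal Y}$, the Hilbertian reuse of the recursion in Proposition~\ref{prop:fw}, symmetrization together with Lemma~\ref{lem:vector-loss} and bounded differences for the centered loss, and the identity $L(F)-L(F_\star)=\norm{F-F_\star}^2$; your version adds bookkeeping the paper leaves implicit, but there is one small slip in your $\epsilon$-argument. A comparator built from a representation of mass $V+\epsilon$ is not in the budget-$V$ class, so you should rescale it by $V/(V+\epsilon)$, which costs only $O(\epsilon)$ in the comparator error, before letting $\epsilon\downarrow0$ on the fixed high-probability event.
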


\begin{proof}
Truncate only the input atoms in a representation of $F_\star$ of mass at most $V$.
Exactly as in Lemma~\ref{lem:comparator}, the resulting comparator $F_N$ belongs to
$\Cclass^{\mathcal Y}$ and satisfies
$\norm{F_N-F_\star}_{L^2(\mu;\mathcal Y)}\le V\eta_N(\mu)$.
The fully-corrective optimization proof of Proposition~\ref{prop:fw} is Hilbertian and
uses only that the feasible set has empirical radius at most $V$, so it gives the same
$16V^2/(m+3)$ optimization gap for vector-valued squared loss.

For transfer to population risk use the centered loss from
Lemma~\ref{lem:vector-loss}. Its envelope is bounded by a universal multiple of
$(B+V)^2$. Symmetrization, Lemma~\ref{lem:vector-loss}, and bounded-difference
concentration yield a uniform empirical-population deviation of the last two terms in
\eqref{eq:vector-stat-uniform}. Finally,
\[
 L(F)-L(F_\star)=\norm{F-F_\star}_{L^2(\mu;\mathcal Y)}^2
\]
for Hilbert-valued conditional expectation, so comparing the empirical greedy iterate
with $F_N$ proves the result.
\end{proof}

Banach- and Hilbert-valued shallow neural approximation has important antecedents,
including Korolev's two-layer Banach-valued theory \cite{korolev2022}; operator-learning
error decompositions also appear in the DeepONet analysis of Lanthaler, Mishra, and
Karniadakis \cite{lanthaler2022}, with earlier operator universal approximation due to
Chen and Chen \cite{chenchen1995}. Theorem~\ref{thm:vector-stat} strengthens
the coordinatewise reduction by exploiting the rank-one structure of the vector atoms
directly; no output truncation is required.

\section*{Discussion and Conclusion}\label{sec:discussion}

\paragraph{What the result does and does not quantify.}
The fixed-radius theorem concerns the weighted regularity class $\Bsharp$. The span
preservation in \eqref{eq:same-span} is useful for density, but it does not make the
weighted ball equivalent to the unweighted one. The statement is therefore a
constructive approximation/learning result \emph{inside a smoothness class}, not a rate
for every $L^2$ target covered by an infinite-dimensional UAT. Proposition~\ref{prop:lipschitz-lb} makes part of this restriction explicit:
the weighted norm controls the Lipschitz seminorm in the embedding pseudometric. In the
one-dimensional threshold experiment, Corollary~\ref{cor:threshold-linear} pins the
weighted norm between $\lambda$ and $1+\lambda$, while the unweighted norm stays at
most one. Thus fixed weighted balls genuinely remove increasingly sharp thresholds in
that example.

\paragraph{Distribution-dependent resolution.}
The natural resolution quantity is $\eta_N(\mu)$, not the uniform tail
$\eta_N^\infty$. For highly concentrated input laws, $\eta_N(\mu)$ can be much smaller
than the envelope bound. This gives a direct route to data-distribution-dependent
resolution choices without altering the statistical-complexity argument.

\paragraph{Computational bottleneck.}
The main unresolved issue is the nonlinear neuron oracle. The statistical theorem is
uniform in $N$, but the parameter search is not. Bach's convex neural network analysis
already emphasizes that the unit-addition subproblem can be the computationally hard
part of an infinite-dimensional convex formulation \cite{bach2017}. The synthetic
experiment sidesteps this issue by replacing the continuous oracle with a finite pool.

\paragraph{Potential operator-learning application.}
Recent greedy operator-learning work focuses on linear operators and kernel estimation
\cite{lin2025}. The present scalar theory is formulated for nonlinear regression maps
of infinite-dimensional inputs. Combined with the vector-valued extension of
Section~\ref{sec:hilbert}, this suggests a route toward nonlinear operator or score
learning; Theorem~\ref{thm:vector-stat} already removes explicit output-dimension dependence from the statistical term. A tractable nonlinear neuron oracle and realistic operator-learning benchmarks remain necessary before making a strong application claim. For context, nonlinear operator
approximation and discretization error have a broader literature
\cite{chenchen1995,lanthaler2022,kovachki2021}.

\paragraph{Conclusion.}
A parameter-weighted variation class provides a clean setting in which three errors can
be separated for greedy neural learning from infinite-dimensional inputs: coordinate
truncation, finite greedy width, and finite sampling. The key statistical observation
is that an $\ell^2$ coordinate embedding has a Hilbert radius bounded independently of
the retained dimension, so a normalized dictionary admits a Rademacher estimate with
no explicit $N$ factor. This comes at two costs that should remain visible: the
weighted ball is more restrictive than the unweighted variation ball, and the nonlinear
neuron oracle is not computationally dimension-free. The quasi-Polish construction of
Galimberti supplies a natural motivating example with $\eta_N(\mu)^2\le N^{-1}$, while
the main quantitative proofs need only measurable coordinates with an $\ell^2$
envelope. The same argument extends to Hilbert-valued responses without output coordinate truncation, yielding a statistical term uniform in the output Hilbert dimension.


\begin{thebibliography}{99}

\bibitem{bach2017}
F.~Bach,
\newblock Breaking the curse of dimensionality with convex neural networks,
\newblock \emph{Journal of Machine Learning Research}, 18(19):1--53, 2017.

\bibitem{barron2008}
A.~R.~Barron, A.~Cohen, W.~Dahmen, and R.~A.~DeVore,
\newblock Approximation and learning by greedy algorithms,
\newblock \emph{Annals of Statistics}, 36(1):64--94, 2008.

\bibitem{bartlettmendelson2002}
P.~L.~Bartlett and S.~Mendelson,
\newblock Rademacher and Gaussian complexities: Risk bounds and structural results,
\newblock \emph{Journal of Machine Learning Research}, 3:463--482, 2002.

\bibitem{chenchen1995}
T.~Chen and H.~Chen,
\newblock Universal approximation to nonlinear operators by neural networks with arbitrary
activation functions and its application to dynamical systems,
\newblock \emph{IEEE Transactions on Neural Networks}, 6(4):911--917, 1995.

\bibitem{emawu2019}
W.~E, C.~Ma, and L.~Wu,
\newblock The Barron space and the flow-induced function spaces for neural network models,
\newblock arXiv:1906.08039, 2019.

\bibitem{galimberti2026}
L.~Galimberti,
\newblock $L^p$ approximation results for infinite dimensional Neural Networks,
\newblock arXiv:2608.08230, 2026.

\bibitem{jaggi2013}
M.~Jaggi,
\newblock Revisiting Frank--Wolfe: Projection-free sparse convex optimization,
\newblock in \emph{Proceedings of the 30th International Conference on Machine Learning},
2013.

\bibitem{klusowskibarron2016}
J.~M.~Klusowski and A.~R.~Barron,
\newblock Approximation by combinations of ReLU and squared ReLU ridge functions with
$\ell^1$ and $\ell^0$ controls,
\newblock arXiv:1607.07819, 2016.

\bibitem{korolev2022}
Y.~Korolev,
\newblock Two-layer neural networks with values in a Banach space,
\newblock \emph{SIAM Journal on Mathematical Analysis}, 54(6):6358--6389, 2022.

\bibitem{kovachki2021}
N.~Kovachki, S.~Lanthaler, and S.~Mishra,
\newblock On universal approximation and error bounds for Fourier Neural Operators,
\newblock arXiv:2107.07562, 2021.

\bibitem{kurkovasanguineti2001}
V.~K\r{u}rkov\'a and M.~Sanguineti,
\newblock Bounds on rates of variable-basis and neural-network approximation,
\newblock \emph{IEEE Transactions on Information Theory}, 47(6):2659--2665, 2001.

\bibitem{lanthaler2022}
S.~Lanthaler, S.~Mishra, and G.~E.~Karniadakis,
\newblock Error estimates for DeepONets: a deep learning framework in infinite dimensions,
\newblock \emph{Transactions of Mathematics and Its Applications}, 6(1):tnac001, 2022.

\bibitem{ledouxtalagrand1991}
M.~Ledoux and M.~Talagrand,
\newblock \emph{Probability in Banach Spaces: Isoperimetry and Processes},
\newblock Springer-Verlag, Berlin, 1991.

\bibitem{lin2025}
Y.~Lin, J.~Jia, Y.-J.~Lee, and R.~Zhang,
\newblock Orthogonal greedy algorithm for linear operator learning with shallow neural network,
\newblock arXiv:2501.02791, 2025.

\bibitem{maurer2016}
A.~Maurer,
\newblock A vector-contraction inequality for Rademacher complexities,
\newblock in \emph{Algorithmic Learning Theory (ALT 2016)}, Lecture Notes in Computer
Science, vol.~9925, Springer, 2016.

\bibitem{siegelxuvariation}
J.~W.~Siegel and J.~Xu,
\newblock Characterization of the variation spaces corresponding to shallow neural networks,
\newblock arXiv:2106.15002, 2021.

\bibitem{siegelxuoga}
J.~W.~Siegel and J.~Xu,
\newblock Optimal convergence rates for the orthogonal greedy algorithm,
\newblock arXiv:2106.15000, 2021.

\bibitem{vandervaartwellner1996}
A.~W.~van der Vaart and J.~A.~Wellner,
\newblock \emph{Weak Convergence and Empirical Processes: With Applications to
Statistics},
\newblock Springer-Verlag, New York, 1996.

\end{thebibliography}
\end{document}